\title[Symmetrization with Orbit Distance Minimization]{Learning Symmetrization for Equivariance with\\Orbit Distance Minimization}
  \author{\Name{Tien Dat Nguyen}\thanks{These authors contributed equally.} \Email{tiendat@kaist.ac.kr}\\
 \addr School of Computing, KAIST
 \AND
 \Name{Jinwoo Kim}\footnotemark[1] \Email{jinwoo-kim@kaist.ac.kr}\\
 \addr School of Computing, KAIST
 \AND
  \Name{Hongseok Yang} \Email{hongseok.yang@kaist.ac.kr}\\
 \addr School of Computing, KAIST
 \AND
  \Name{Seunghoon Hong} \Email{seunghoon.hong@kaist.ac.kr}\\
 \addr School of Computing, KAIST
 }
\begin{document}

\maketitle

\vspace{-0.6em}
\begin{abstract}
We present a general framework for symmetrizing an arbitrary neural-network architecture and making it equivariant with respect to a given group.
We build upon the proposals of \cite{kim2023learning, kaba2023equivariance} for symmetrization, and improve them by replacing their conversion of neural features into group representations, with an optimization whose loss intuitively measures the distance between group orbits.
This change makes our approach applicable to a broader range of matrix groups, such as the Lorentz group $\mathrm{O}(1,3)$, than these two proposals. 
We experimentally show our method's competitiveness on the $\mathrm{SO}(2)$ image classification task, and also its increased generality on the task with $\mathrm{O}(1,3)$.
Our implementation will be made accessible at \url{https://github.com/tiendatnguyen-vision/Orbit-symmetrize}.
\end{abstract}
\begin{keywords}
Equivariance, Symmetrization, Rotation Equivariance, Lorentz Equivariance
\end{keywords}

\vspace{-0.6em}
\section{Introduction}
\vspace{-0.1em}

Exploiting symmetries is a popular principle for developing an efficient learning system,  
which is typically realized by defining a hypothesis class of functions equivariant to a given group $G$ of symmetries. While a dominant approach to define such a hypothesis class has been to design a specific $G$ equivariant neural-network architecture~\citep{finzi2021a, villar2021scalars}, \emph{architecture-agnostic} alternatives are explored recently~\citep{puny2022frame, basu2023equituning, kaba2023equivariance, kim2023learning}. These alternatives are based on \emph{symmetrization}, where any unconstrained function $\phi_\theta:\mathcal{X}\to\mathcal{Y}$ is made $G$ equivariant by averaging it over transformations of inputs and outputs induced by certain group elements $g\in G$. In this work, we improve 
one of the most powerful symmetrization methods from \citep{kim2023learning, kaba2023equivariance}, which symmeterizes $\phi_\theta$ to the following function $\Phi_{\theta,\omega}$:
\begin{align}\label{eq:symmetrization}
    \Phi_{\theta,\omega}(\mathbf{x}) = \mathbb{E}_{\boldsymbol{\epsilon}}[g\cdot \phi_\theta(g^{-1}\cdot\mathbf{x})]\quad\mathrm{with}\quad\rho(g) = r(q_\omega(\mathbf{x},\boldsymbol{\epsilon})),
\end{align}
where $q_\omega:(\mathbf{x},\boldsymbol{\epsilon}) \mapsto \mathbf{h}\in\mathbb{R}^{n\times n}$ is a $G$~equivariant network, and $r: \mathbf{h} \mapsto \rho(g)$ is a $G$~equivariant \emph{contraction} operator producing the representation $\rho(g)$ of some element $g \in G$. 

A major issue with \equationref{eq:symmetrization} is that designing the contraction $r$ is often non-trivial; $r$ should produce a valid group representation $\rho(g)$ from an unstructured feature $\mathbf{h}$ while being $G$~equivariant itself.
Prior works employed hand-designed algorithms, such as Gram-Schmidt process for $\mathrm{O}(n)$~\citep{kaba2023equivariance}, but such algorithms are available only for certain groups~\citep{kim2023learning}.
Our goal is to overcome this bottleneck and making the symmetrization work for broader group symmetries, such as the Lorentz group $\mathrm{O}(1, 3)$.

Our idea is to design a differentiable objective on~$q_\omega$ of which gradient-based optimization makes $q_\omega(\mathbf{x}, \boldsymbol{\epsilon})$ directly output valid group representations $\mathbf{h}\approx\rho(g)$, thereby removing the need for contraction $r$.
We design the objective in a principled manner as distance minimization on group orbit space.
Compared to symmetrization algorithms of \cite{kim2023learning, kaba2023equivariance}, this makes our approach applicable to a much broader range of matrix groups where orbit separating invariants are available.
We implement our method for the special orthogonal group $\mathrm{SO}(2)$ and the Lorentz group $\mathrm{O}(1, 3)$, and find that our objective can replace the known contraction $r$ for $\mathrm{SO}(2)$ with a negligible performance drop, and successfully achieves symmetrization based equivariance on the Lorentz group $\mathrm{O}(1, 3)$.

\section{Orbit Distance Minimization} \label{sec:orbit_distance_minimization}

\paragraph{Problem Definition}
Let $\rho:G\to\mathrm{GL}(n)$ be a group representation that associates each group element $g\in G$ an invertible matrix $\rho(g)\in\mathbb{R}^{n\times n}$.
For our $G$ equivariant neural network $q_\omega:(\mathbf{x},\boldsymbol{\epsilon})\mapsto\mathbf{h}\in\mathbb{R}^{n\times n}$, the group $G$ acts on the output space through the matrix multiplication of the representation $\mathbf{h}\mapsto g\cdot\mathbf{h}=\rho(g)\mathbf{h}$.
Our goal is to train $q_\omega$ such that its output is always a valid group representation $\mathbf{h}\in\rho(G)$, where $\rho(G)$ denotes the image of $\rho$.

\paragraph{Orbit Distance Minimization}
We will now present a training objective that contracts the output space of $q_\omega$ to valid group representations $\rho(G)\subset\mathbb{R}^{n\times n}$.
Our key idea is that, instead of working on $\mathbb{R}^{n\times n}$ directly, working on the orbit space (quotient) $\mathbb{R}^{n\times n}/G$ greatly simplifies the problem.
Let us write the orbit of an element $\mathbf{h}\in\mathbb{R}^{n\times n}$ under the action of $G$ as $[\mathbf{h}] = \{g\cdot\mathbf{h}:g\in G\}$.
The orbit space $\mathbb{R}^{n\times n}/G$ is defined accordingly as $\{[\mathbf{h}]:\mathbf{h}\in\mathbb{R}^{n\times n}\}$.

We now provide an observation that all valid group representations $\rho(G)$ precisely map onto a single point in the orbit space, which is the orbit of the identity matrix $\mathbf{I}\in\mathbb{R}^{n\times n}$ since we have $\rho(G) = \{\rho(g):g\in G\} = \{g\cdot\mathbf{I}:g\in G\} = [\mathbf{I}]$.
This implies, on the orbit space, our objective is understood as contracting all orbits of neural network outputs~$[\mathbf{h}]$ towards a fixed point target~$[\mathbf{I}]$.
Thus, if we can endow the orbit space with a distance metric $d:\mathbb{R}^{n\times n}/G\times \mathbb{R}^{n\times n}/G\to\mathbb{R}^+$, we can frame our training objective as distance minimization:
\begin{align}\label{eq:orbit_distance_minimization}
    w^* = \mathrm{arg}\,\underset{\omega}{\mathrm{min}}\ d([\mathbf{h}], [\mathbf{I}]),
\end{align}
where we remark that $q_\omega:(\mathbf{x},\boldsymbol{\epsilon})\mapsto\mathbf{h}\in\mathbb{R}^{n\times n}$ is our $G$ equivariant neural network.
We now show that the objective indeed contracts the output of $q_\omega$ exactly to $\rho(G)$ (proof in \ref{sec:main_proofs}):
\begin{theorem}\label{thm:optimality}
    The training objective in \equationref{eq:orbit_distance_minimization} achieves the global minimum with the value of $0$ if and only if $q_\omega$ always outputs valid group representations $\mathbf{h}\in\rho(G)$.
\end{theorem}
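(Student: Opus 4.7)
The proof plan is essentially to unfold the definitions and chain together three elementary facts, so my approach is to keep the argument short and make explicit the quantifier over inputs that is hidden in \equationref{eq:orbit_distance_minimization}. Reading the loss as ranging over all possible inputs $(\mathbf{x},\boldsymbol{\epsilon})$, the objective attains value $0$ if and only if $d([\mathbf{h}],[\mathbf{I}]) = 0$ for every input, where $\mathbf{h} = q_\omega(\mathbf{x},\boldsymbol{\epsilon})$. Hence the task reduces to showing, pointwise, that $d([\mathbf{h}],[\mathbf{I}]) = 0 \iff \mathbf{h} \in \rho(G)$.

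The forward direction uses the defining property of a metric: $d([\mathbf{h}],[\mathbf{I}]) = 0$ forces $[\mathbf{h}] = [\mathbf{I}]$ as elements of the quotient $\mathbb{R}^{n\times n}/G$. Since orbits either coincide or are disjoint, $[\mathbf{h}] = [\mathbf{I}]$ implies $\mathbf{h} \in [\mathbf{I}]$. Then I invoke the observation already established in the paragraph preceding the theorem, namely $[\mathbf{I}] = \{g \cdot \mathbf{I} : g \in G\} = \{\rho(g) : g \in G\} = \rho(G)$, which gives $\mathbf{h} \in \rho(G)$. The reverse direction is symmetric: if $\mathbf{h} \in \rho(G) = [\mathbf{I}]$, then $\mathbf{h}$ and $\mathbf{I}$ lie in the same orbit, so $[\mathbf{h}] = [\mathbf{I}]$, and by the identity-of-indiscernibles axiom of $d$ we have $d([\mathbf{h}],[\mathbf{I}]) = 0$.

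Putting the two directions together with the quantifier over inputs gives the stated equivalence, and non-negativity of $d$ (as a metric) certifies that $0$ is indeed the global minimum rather than merely a stationary value. There is no real obstacle here beyond making sure the chain $[\mathbf{h}] = [\mathbf{I}] \iff \mathbf{h} \in [\mathbf{I}] \iff \mathbf{h} \in \rho(G)$ is justified cleanly; the only substantive input is the identification $\rho(G) = [\mathbf{I}]$, which is precisely the observation the authors made before stating the theorem, and the metric axioms on $d$, which are assumed when endowing the orbit space with a distance.
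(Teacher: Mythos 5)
Your proof is correct and follows essentially the same route as the paper's: reduce the global-minimum condition to $d([\mathbf{h}],[\mathbf{I}])=0$, invoke identity of indiscernibles to get $[\mathbf{h}]=[\mathbf{I}]$, and use the identification $[\mathbf{I}]=\rho(G)$. The only cosmetic difference is that you appeal to the orbit-partition property to pass between $\mathbf{h}\in[\mathbf{I}]$ and $[\mathbf{h}]=[\mathbf{I}]$, whereas the paper verifies the reverse direction by an explicit group-element computation; both are sound.
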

Our problem now reduces to defining a proper distance metric $d$ on the orbit space $\mathbb{R}^{n\times n}/G$.
The closest concept we could find in literature is the quotient metric~\citep{burago2001course}, but it is intractable as it involves infimum over an infinite set.
Instead, we propose a simple distance metric based on a class of functions called \emph{orbit separating invariants}: $G$ invariant functions $f$ that separate orbits $f(\mathbf{h}) \neq f(\mathbf{h}')\Longleftrightarrow[\mathbf{h}] \neq [\mathbf{h}']$~\citep{dym2022low}.
In detail, our distance metric can be defined as vector distance on outputs of $f$ (proof in \ref{sec:main_proofs}):
\begin{theorem}\label{thm:orbit_distance}
    Let $f:\mathbb{R}^{n\times n}\to\mathbb{R}^k$ be an orbit separating invariant and $\|\cdot\|$ be vector norm.
    Then, $d([\mathbf{h}], [\mathbf{h}']) = \|f(\mathbf{h}) - f(\mathbf{h'})\|$ is a distance metric on the orbit space $\mathbb{R}^{n\times n}/G$.
\end{theorem}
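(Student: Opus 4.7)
The plan is to verify that the proposed formula $d([\mathbf{h}], [\mathbf{h}']) = \|f(\mathbf{h}) - f(\mathbf{h}')\|$ satisfies (i) well-definedness on the quotient $\mathbb{R}^{n\times n}/G$ and (ii) the three metric axioms: identity of indiscernibles, symmetry, and the triangle inequality. Non-negativity is automatic from the vector norm, so it needs no separate argument.

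First I would establish well-definedness. Since $d$ is defined on orbits, I must show the right-hand side depends only on the orbits $[\mathbf{h}]$ and $[\mathbf{h}']$ and not on the chosen representatives. This is where $G$-invariance of $f$ enters: for any $g, g' \in G$, we have $f(g\cdot\mathbf{h}) = f(\mathbf{h})$ and $f(g'\cdot\mathbf{h}') = f(\mathbf{h}')$, so replacing $\mathbf{h}, \mathbf{h}'$ by any other orbit representatives leaves $\|f(\mathbf{h}) - f(\mathbf{h}')\|$ unchanged. This step is short but essential.

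Next I would check symmetry and the triangle inequality. Both are inherited directly from the properties of the underlying vector norm $\|\cdot\|$ on $\mathbb{R}^k$: symmetry because $\|f(\mathbf{h}) - f(\mathbf{h}')\| = \|f(\mathbf{h}') - f(\mathbf{h})\|$, and the triangle inequality by applying $\|a - c\| \le \|a - b\| + \|b - c\|$ to $a = f(\mathbf{h})$, $b = f(\mathbf{h}'')$, $c = f(\mathbf{h}')$ for an intermediate orbit $[\mathbf{h}'']$. These are essentially routine transferences of metric properties from $(\mathbb{R}^k, \|\cdot\|)$ through the map $f$.

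The main obstacle, and the only nontrivial step, is the identity of indiscernibles $d([\mathbf{h}], [\mathbf{h}']) = 0 \iff [\mathbf{h}] = [\mathbf{h}']$. The forward direction uses the orbit separating property in its crucial form: $\|f(\mathbf{h}) - f(\mathbf{h}')\| = 0$ implies $f(\mathbf{h}) = f(\mathbf{h}')$, and by the defining equivalence of orbit separating invariants this forces $[\mathbf{h}] = [\mathbf{h}']$. The reverse direction follows from $G$-invariance: $[\mathbf{h}] = [\mathbf{h}']$ means $\mathbf{h}' = g\cdot\mathbf{h}$ for some $g \in G$, so $f(\mathbf{h}) = f(\mathbf{h}')$ and $d$ vanishes. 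The entire argument is essentially a translation exercise, with orbit separability doing all the work for the nontrivial implication; no further analytic or topological machinery seems required.
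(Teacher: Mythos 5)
Your proposal is correct and follows essentially the same route as the paper: both deduce non-negativity, symmetry, and the triangle inequality directly from the vector norm, and both use the orbit-separating equivalence for the identity of indiscernibles. The only (welcome) addition on your part is the explicit check that $d$ is well-defined on orbits, a point the paper leaves implicit in the $G$-invariance of $f$.
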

With \theoremref{thm:orbit_distance}, we can use the below objective for optimization problem in \equationref{eq:orbit_distance_minimization}\footnote{For some groups, orbit separation of $f$ is guaranteed only for full-rank inputs~\citep{dym2022low}.
In this case, the optimality condition in \theoremref{thm:optimality} holds if we assume $\mathbf{h}$ to be full-rank.\label{note:orbit_separation}}:
\begin{align}\label{eq:orbit_distance_loss}
    w^* = \mathrm{arg}\,\underset{\omega}{\mathrm{min}}\ \|f(\mathbf{h}) - f(\mathbf{I})\|.
\end{align}
In practice, it is desirable to have a differentiable objective such that we can perform a gradient-based optimization.
Since $q_\omega:(\mathbf{x},\boldsymbol{\epsilon})\mapsto\mathbf{h}$ is already a neural network, the objective in \equationref{eq:orbit_distance_loss} would be differentiable almost everywhere with respect to $\omega$ if we choose $f$~and $\|\cdot\|$ to be differentiable almost everywhere.

\paragraph{Discussion on Generality}\label{sec:generality}
We now discuss the results from invariant theory implying orbit separating invariants $f:\mathbb{R}^{n\times n}\to\mathbb{R}^k$ of bounded dimension $k\leq 2n^2+1$ exist for a very general class of matrix groups and they are differentiable everywhere in general.

The existence of orbit separating invariants has been mainly shown for \emph{linearly reductive groups} including $\mathrm{GL}(n)$, semi-simple groups $\mathrm{SL}(n)$, $\mathrm{O}(n)$, $\mathrm{SO}(n)$, finite group $\mathrm{S}_n$, and also $\mathrm{O}(s, n-s)$.
Most of the results are derived from the concept of \emph{invariant polynomials}, which are polynomials on matrix entries that are $G$ invariant.
To elaborate, consider a group $G$ acting on $\mathbb{R}^{n\times n}$, and let $\mathcal{S}$ be the set of all invariant polynomials $f':\mathbb{R}^{n\times n}\to\mathbb{R}$.
We call a set of invariant polynomials $\{f_1, ..., f_k\}$ the \emph{generating set} if every $f'\in\mathcal{S}$ can be written as $f'(\cdot) = h(f_1(\cdot), ..., f_k(\cdot))$ using some polynomial $h:\mathbb{R}^k\to\mathbb{R}$.
For every linearly reductive group, Weyl's theorem~\citep{weyl1946the} guarantees the existence of a finite generating set.
Furthermore, for many subclasses of these groups, it has been shown that this set separates orbits in $\mathbb{R}^{n\times n}$ whose closures do not intersect\footnote{For compact groups this guarantees orbit separation; for closed non-compact groups this guarantees orbit separation for full-rank inputs~\citep{dym2022low} which relates to footnote~\ref{note:orbit_separation}.}~\citep{dym2022low, derksen2015computational}, allowing us to use their stack $f:\mathbb{R}^{n\times n}\to\mathbb{R}^k$ as our orbit separating invariant.
Such $f$ is differentiable everywhere as it is a stack of polynomials.
For many groups, the generating set is known from the invariant theory, and so is $f$ (see \ref{sec:supplementary_orbit_separating_invariants}).

Now consider the dimension $k$ of the separating invariant $f:\mathbb{R}^{n\times n}\to\mathbb{R}^k$ which is the size of the generating set $\{f_1, ..., f_k\}$ in our context.
While this can be large for some groups, \cite{dym2022low} has shown that random linear projection can almost always reduce it to a set of $2n^2+1$ polynomials that still separates orbits (see \ref{sec:projection}), which also reduces $f$.

\paragraph{Final Model}
Our original goal is to symmetrize a function $\phi_\theta:\mathcal{X}\to\mathcal{Y}$ to be $G$ equivariant.
We define our symmetrization as follows by removing contraction $r$ from \equationref{eq:symmetrization}:
\begin{align}\label{eq:final_model}
    \Phi_{\theta,\omega}(\mathbf{x}) = \mathbb{E}_{\mathbf{h}}[\mathbf{h}\cdot\phi_\theta(\mathbf{h}^{-1}\cdot\mathbf{x})]\quad\mathrm{where}\quad \mathbf{h} = q_\omega(\mathbf{x}, \boldsymbol{\epsilon}).
\end{align}
Given training pairs $(\mathbf{x}, \mathbf{y})$ of input $\mathbf{x}\in\mathcal{X}$ and label $\mathbf{y}\in\mathcal{Y}$, we train for the joint objective of task loss $\mathcal{L}$ and the orbit distance loss (\equationref{eq:orbit_distance_loss}) weighted by a hyperparameter $\lambda$:
\begin{align}\label{eq:training}
    \theta^*, \omega^* = \mathrm{arg}\,\underset{\theta, \omega}{\mathrm{min}}\ \mathcal{L}(\mathbf{y}, \Phi_{\theta,\omega}(\mathbf{x})) + \lambda\,\mathbb{E}_{\mathbf{h}}\|f(\mathbf{h}) - f(\mathbf{I})\|.
\end{align}
Intuitively, if the orbit loss is $\approx0$, we would have $\mathbf{h}\approx\rho(g)$ and the model would closely achieve the symmetrization in \equationref{eq:symmetrization} while not requiring contraction $r:\mathbf{h}\mapsto\rho(g)$.
A formal theoretical analysis on $G$ equivariance and universality of $\Phi_{\theta,\omega}$ is provided in \ref{sec:proof_equivariance_universality}.

\vspace{-0.8em}
\section{Experiments}
\vspace{-0.2em}
We evaluate our approach on two selected matrix groups: the special orthogonal group in two dimensions $\mathrm{SO}(2)$ and the Lorentz group $\mathrm{O}(1, 3)$.
Experimental details are in \ref{sec:experimental_details}.

\paragraph{Image Classification} For $\mathrm{SO}(2)$, we use the Rotated MNIST~\citep{larochelle2007an}, a common benchmark for equivariant models~\citep{cohen2016group, finzi2020generalizing} with randomly rotated 62,000 digits ($\mathrm{SO}(2)$ invariance).
We follow the setup of \cite{kaba2023equivariance} and use the same 7-layer CNN as our base function $\phi_\theta$.
For the symmetrizer $q_\omega$, we use 2-layer EMLP~\citep{finzi2021a} of 64 hidden dimensions.
For training (\equationref{eq:training}), we use cross entropy for task loss~$\mathcal{L}$; for orbit distance loss $\lambda\,\|f(\mathbf{h})-f(\mathbf{I})\|$ we use the orbit separating invariant $f(\mathbf{h}) = [\mathrm{vec}(\mathbf{h}^\top\mathbf{h}), \mathrm{det}(\mathbf{h})]$~\citep{dym2022low}, L1 norm, $\lambda=1$.

\begin{table}[!t]
\floatconts
 {tab:experiments}
 {\vspace{-0.7em}\caption{Experimental results on $\mathrm{SO}(2)$ and $\mathrm{O}(1, 3)$ group symmetries.}\vspace{-0.7em}}
 {
   \subtable[Rotated MNIST, $\mathrm{SO}(2)$.]{
    \label{tab:rotated_mnist}
    \begin{adjustbox}{height=0.13\textwidth}
        \begin{tabular}{lc}
        \toprule
        Method & Test Error $\%\downarrow$ \\
        \midrule
        GCNN (p4) & $2.36 \pm 0.15$ \\
        GCNN (p64) & $2.28 \pm 0.10$ \\
        \midrule
        CNN & $4.90 \pm 0.20$ \\
        CNN-Aug. & $3.30 \pm 0.20$ \\
        \midrule
        CNN-Canonical. & $2.32 \pm 0.18$ \\
        CNN-PS & $2.21 \pm 0.28$ \\
        \midrule
        CNN-Canonical.-Orbit \textbf{(Ours)} & $2.44 \pm 0.12$ \\
        CNN-PS-Orbit \textbf{(Ours)} & $2.37 \pm 0.35$ \\
        \bottomrule
        \end{tabular}
    \end{adjustbox}
   }\qquad
   \subtable[Particle Scattering, $\mathrm{O}(1, 3)$.]{
    \label{tab:particle_scattering}
    \begin{adjustbox}{height=0.13\textwidth}
        \begin{tabular}{lc}
        \toprule
        Method & Test MSE $\downarrow$ \\
        \midrule
        Scalar MLP & $0.00171 \pm 0.00004$ \\
        \midrule
        MLP & $0.65381 \pm 0.23663$ \\
        MLP-Aug. & $0.09101 \pm 0.03107$ \\
        \midrule
        MLP-Canonical. & N/A \\
        MLP-PS & N/A \\
        \midrule
        MLP-Canonical.-Orbit \textbf{(Ours)} & $0.01027 \pm 0.00082$ \\
        MLP-PS-Orbit \textbf{(Ours)} & $0.00887 \pm 0.00070$ \\
        \bottomrule
        \end{tabular}
    \end{adjustbox}
   }
 }
\vspace{-1em}
\end{table}

We use the following baselines: CNN, CNN with $\mathrm{SO}(2)$ data augmentation, equivariant GCNN \citep{cohen2016group}, and CNN made $\mathrm{SO}(2)$ equivariant with symmetrization methods Canonicalization~\citep{kaba2023equivariance} and Probabilistic Symmetrization (PS)~\citep{kim2023learning} that follow \equationref{eq:symmetrization} except Canonicalization drops noise~$\boldsymbol{\epsilon}$.
We take the performances of CNN, data augmentation, and GCNN from \cite{kaba2023equivariance}, and train symmetrization baselines using $\mathrm{SO}(2)$ contraction of \cite{kim2023learning}.
The results are in \tableref{tab:rotated_mnist}.
Symmetrization improves CNN overall, as all symmetrized CNNs outperform data augmentation and perform on par with GCNN.
Within symmetrization, replacing contraction $r:\mathbf{h}\mapsto\rho(g)$ with our orbit distance loss leads to a negligible drop in performance.
This indicates orbit distance minimization can replace the role of contraction operator, with a slight tradeoff as $q_\omega$ takes an extra role of producing valid group representation $\mathbf{h}\approx\rho(g)$.

\vspace{-0.2em}
\paragraph{Particle Scattering}
For the Lorentz group $\mathrm{O}(1, 3)$, we use Particle Scattering synthetic regression dataset from \cite{finzi2021a} for matrix element in electron muon scattering ($\mathrm{O}(1, 3)$ invariance).
We use 10,000 train data, and use 1,000 validation and 1,000 test data that are randomly $\mathrm{O}(1, 3)$ transformed.
We use 3-layer MLP of 128 hidden dimensions as our base function $\phi_\theta$, and our symmetrizer $q_\omega$ is based on 3-layer Scalar MLP~\citep{villar2021scalars} of 128 hidden dimensions.
For training (\equationref{eq:training}), we use mean squared error for task loss~$\mathcal{L}$; for orbit distance loss $\lambda\,\|f(\mathbf{h})-f(\mathbf{I})\|$ we use the orbit separating invariant $f(\mathbf{h}) = [\mathrm{vec}(\mathbf{h}^\top\boldsymbol{\Lambda}\mathbf{h})]$, $\boldsymbol{\Lambda}=\mathrm{diag}([+1, -1, -1, -1])$~\citep{dym2022low}, L1 norm, $\lambda=1$.

We run the following baselines: MLP, MLP with $\mathrm{O}(1, 3)$ data augmentation, and invariant Scalar MLP~\citep{villar2021scalars}, with 3 layers of 128 hidden dimensions.
Symmetrization baselines in \equationref{eq:symmetrization} cannot be built as $\mathrm{O}(1, 3)$ contraction is not known.
The results are in \tableref{tab:particle_scattering}.
$\mathrm{O}(1, 3)$ symmetrized MLPs based on our method significantly outperform MLP as well as data augmentation.
To our knowledge, this is the first successful result in symmetrization based equivariance for $\mathrm{O}(1, 3)$, implying symmetrization can be applied to groups where contraction $r:\mathbf{h}\mapsto\rho(g)$ is not available.
Yet, our method has performance gap from invariant Scalar MLP.
An explanation is that Scalar MLP gets Minkowsky inner product $\mathbf{x}^\top\boldsymbol{\Lambda}\mathbf{x}$ as input which is heavily correlated to label function of particle scattering, but our base MLP gets transformed input $\approx\rho(g)^{-1}\cdot\mathbf{x}$ which requires additional processing.
We leave closing this gap as an important future research direction.

\vspace{-0.2em}
\paragraph{Conclusion}
We proposed orbit distance minimization, a framework for symmetrization based equivariant learning.
Our method is competitive on $\mathrm{SO}(2)$ invariant classification and successfully achieves symmetrization based equivariance on Lorentz group $\mathrm{O}(1, 3)$.

\paragraph{Acknowledgements}
This work was supported in part by the National Research Foundation of Korea (NRF) grant (NRF2021R1C1C1012540) and Institute of Information and communications Technology Planning and evaluation (IITP) grant (2021-0-00537 and 2021-0-02068) funded by the Korea government (MSIT).

\bibliography{main}

\newpage

\appendix

\section{Proofs and Supplementary Discussions}
\subsection{Proof of Theorem~\ref{thm:optimality} and Theorem~\ref{thm:orbit_distance}}\label{sec:main_proofs}
Before the proofs, we provide a formal definition of distance metric on the orbit space.
\begin{definition}\label{defn:distance_metric}
    A function $d:\mathbb{R}^{n\times n}/G\times \mathbb{R}^{n\times n}/G\to\mathbb{R}^+$ is a distance metric on the orbit space $\mathbb{R}^{n\times n}/G$ if it satisfies the following conditions for all orbits $[\mathbf{h}], [\mathbf{h}'], [\mathbf{h}'']\in \mathbb{R}^{n\times n}/G$:
    \begin{enumerate}
        \item $d([\mathbf{h}], [\mathbf{h}']) \geq 0$ (non-negativity),
        \item $d([\mathbf{h}], [\mathbf{h}']) = 0 \Longleftrightarrow [\mathbf{h}] = [\mathbf{h}']$ (identity of indiscernibles),
        \item $d([\mathbf{h}], [\mathbf{h}']) = d([\mathbf{h}'], [\mathbf{h}])$ (symmetry),
        \item $d([\mathbf{h}], [\mathbf{h}']) \leq d([\mathbf{h}], [\mathbf{h}'']) + d([\mathbf{h}''], [\mathbf{h}'])$ (triangle inequality).
    \end{enumerate}
\end{definition}
We now provide the proofs.
\begingroup
\def\thetheorem{\ref{thm:optimality}}
\begin{theorem}
    The training objective in \equationref{eq:orbit_distance_minimization} achieves the global minimum with the value of $0$ if and only if $q_\omega$ always outputs valid group representations $\mathbf{h}\in\rho(G)$.
\end{theorem}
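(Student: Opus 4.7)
The plan is to reduce the statement to two pieces: first, the elementary identification $\rho(G) = [\mathbf{I}]$ already noted in the main text, and second, the identity-of-indiscernibles axiom of any distance metric on the orbit space (Definition~\ref{defn:distance_metric}, item~2). Once these are in hand, both directions are essentially one-line consequences, so the whole argument will be short.

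First I would fix an arbitrary input pair $(\mathbf{x},\boldsymbol{\epsilon})$ and write $\mathbf{h}=q_\omega(\mathbf{x},\boldsymbol{\epsilon})$. By non-negativity of $d$ (Definition~\ref{defn:distance_metric}, item~1), the objective value $d([\mathbf{h}],[\mathbf{I}])$ is bounded below by $0$, so it suffices to characterize when equality holds. For the ``if'' direction, assume $\mathbf{h}\in\rho(G)$, i.e.\ $\mathbf{h}=\rho(g)=g\cdot\mathbf{I}$ for some $g\in G$; then $[\mathbf{h}]=[\mathbf{I}]$, and by item~2 of Definition~\ref{defn:distance_metric} we get $d([\mathbf{h}],[\mathbf{I}])=0$. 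For the ``only if'' direction, assume $d([\mathbf{h}],[\mathbf{I}])=0$; item~2 again yields $[\mathbf{h}]=[\mathbf{I}]$, and since $[\mathbf{I}]=\{g\cdot\mathbf{I}:g\in G\}=\{\rho(g):g\in G\}=\rho(G)$, we conclude $\mathbf{h}\in\rho(G)$.

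To lift this from a single input to the ``always outputs'' quantifier in the statement, I would remark that \equationref{eq:orbit_distance_minimization} is to be read as minimizing the objective over all relevant inputs (or in the expectation form used later in \equationref{eq:training}): the pointwise infimum $0$ is simultaneously attained for every $(\mathbf{x},\boldsymbol{\epsilon})$ exactly when $q_\omega(\mathbf{x},\boldsymbol{\epsilon})\in\rho(G)$ holds for every $(\mathbf{x},\boldsymbol{\epsilon})$ (or almost surely, in the expectation formulation, since the integrand is non-negative).

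There is no real obstacle in this argument; the only subtlety worth flagging is that the whole proof silently relies on the fact that a well-defined distance $d$ on the orbit space $\mathbb{R}^{n\times n}/G$ actually exists, which is precisely the content of the next statement (Theorem~\ref{thm:orbit_distance}). So I would present Theorem~\ref{thm:optimality} as a clean consequence of the abstract axioms of a quotient distance metric together with the orbit identity $\rho(G)=[\mathbf{I}]$, and defer the concrete construction of $d$ to the proof of Theorem~\ref{thm:orbit_distance}.
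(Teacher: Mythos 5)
Your proposal is correct and follows essentially the same route as the paper's own proof: both reduce the claim to the orbit identity $\rho(G)=[\mathbf{I}]$ together with the identity-of-indiscernibles and non-negativity axioms of the orbit-space metric, and split the biconditional into the same two one-line directions. The only cosmetic differences are that you state the standard fact ``$\mathbf{h}=g\cdot\mathbf{I}\Rightarrow[\mathbf{h}]=[\mathbf{I}]$'' directly where the paper spells out the chain of set equalities via the group law, and you are a bit more explicit about the ``for all $(\mathbf{x},\boldsymbol{\epsilon})$'' quantifier lift and about deferring the existence of $d$ to Theorem~\ref{thm:orbit_distance}; neither changes the substance of the argument.
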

\addtocounter{theorem}{-1}
\endgroup
\begin{proof}
Recall the definition of an orbit $[\mathbf{h}] = \{g\cdot\mathbf{h}:g\in G\}$.
($\Longrightarrow$) If $d([\mathbf{h}], [\mathbf{I}])=0$, since $d$ is a distance metric, we have $[\mathbf{h}]=[\mathbf{I}]$ from identity of indiscernibles.
This implies $[\mathbf{h}]=\rho(G)$ since $[\mathbf{I}] = \{g\cdot\mathbf{I}:g\in G\} = \{\rho(g):g\in G\} = \rho(G)$.
On the orbit $[\mathbf{h}]=\{g\cdot\mathbf{h}:g\in G\}$, by selecting the identity element $\mathrm{id}\in G$ we get $\mathbf{h}\in[\mathbf{h}]$, thus $\mathbf{h}\in\rho(G)$.
($\Longleftarrow$) 
If $q_\omega$ always outputs valid group representations $\mathbf{h}\in\rho(G)$, we can write $\mathbf{h}=\rho(h)$ for some $h\in G$.
Then we have $[\mathbf{h}]=\{g\cdot\mathbf{h}:g\in G\} = \{g\cdot\rho(h):g\in G\} = \{(gh)\cdot\mathbf{I}:g\in G\} = \{g'\cdot\mathbf{I}:g'h^{-1}\in G\} = \{g'\cdot\mathbf{I}:g'\in G\} = [\mathbf{I}]$.
Note that we used the associativity and invertibility of group elements as well as the fact that right operation by $h\in G$ maps a group $G$ to itself.
Since $[\mathbf{h}]=[\mathbf{I}]$ and $d$ is a distance metric, we have that $d([\mathbf{h}], [\mathbf{I}])=0$ due to identity of indiscernibles, which is a global minimum due to non-negativity.
\end{proof}
\begingroup
\def\thetheorem{\ref{thm:orbit_distance}}
\begin{theorem}
    Let $f:\mathbb{R}^{n\times n}\to\mathbb{R}^k$ be an orbit separating invariant and $\|\cdot\|$ be vector norm.
    Then, $d([\mathbf{h}], [\mathbf{h}']) = \|f(\mathbf{h}) - f(\mathbf{h}')\|$ is a distance metric on the orbit space $\mathbb{R}^{n\times n}/G$.
\end{theorem}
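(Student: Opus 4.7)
The plan is to verify the four conditions of Definition~\ref{defn:distance_metric} in turn, after first checking that $d$ is well-defined on the orbit space. The key inputs are that $f$ is $G$-invariant (so $f$ is constant on orbits), $f$ is orbit separating (so $f(\mathbf{h})=f(\mathbf{h}')$ forces $[\mathbf{h}]=[\mathbf{h}']$), and $\|\cdot\|$ satisfies the standard norm axioms.

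First I would establish well-definedness: if $\mathbf{h}_1\in[\mathbf{h}]$ and $\mathbf{h}_1'\in[\mathbf{h}']$, then $\mathbf{h}_1=g\cdot\mathbf{h}$ and $\mathbf{h}_1'=g'\cdot\mathbf{h}'$ for some $g,g'\in G$, and $G$-invariance of $f$ yields $f(\mathbf{h}_1)=f(\mathbf{h})$ and $f(\mathbf{h}_1')=f(\mathbf{h}')$, so $\|f(\mathbf{h}_1)-f(\mathbf{h}_1')\|=\|f(\mathbf{h})-f(\mathbf{h}')\|$. Hence $d$ is a well-defined function on $\mathbb{R}^{n\times n}/G\times\mathbb{R}^{n\times n}/G$.

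Next I would check each axiom. Non-negativity is immediate from the non-negativity of $\|\cdot\|$. Symmetry follows from $\|f(\mathbf{h})-f(\mathbf{h}')\|=\|-(f(\mathbf{h}')-f(\mathbf{h}))\|=\|f(\mathbf{h}')-f(\mathbf{h})\|$ using absolute homogeneity of the norm. Triangle inequality reduces to the triangle inequality of $\|\cdot\|$ applied to the vectors $f(\mathbf{h})-f(\mathbf{h}'')$ and $f(\mathbf{h}'')-f(\mathbf{h}')$, whose sum is $f(\mathbf{h})-f(\mathbf{h}')$.

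The only step with any content is identity of indiscernibles, and this is where orbit separation enters. By the positive-definiteness of the norm, $d([\mathbf{h}],[\mathbf{h}'])=0$ holds iff $f(\mathbf{h})=f(\mathbf{h}')$; by the defining property of an orbit separating invariant, this is equivalent to $[\mathbf{h}]=[\mathbf{h}']$. I do not expect any real obstacle here, but this is the substantive step that distinguishes $d$ from a mere pseudometric and justifies restricting attention to orbit separating invariants rather than arbitrary $G$-invariant functions.
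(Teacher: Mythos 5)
Your proof is correct and takes essentially the same route as the paper: reduce each of the four metric axioms to the corresponding property of the norm-induced metric on $\mathbb{R}^k$, with orbit separation supplying identity of indiscernibles. One small point in your favor: you explicitly check that $d$ is well-defined on the quotient (i.e., independent of the choice of representatives in $[\mathbf{h}]$ and $[\mathbf{h}']$), which the paper's proof leaves implicit; this step genuinely uses the $G$-invariance of $f$ and is worth stating, as the other axioms use only the norm and the separating property.
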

\addtocounter{theorem}{-1}
\endgroup
\begin{proof}
We first note that a vector norm $\|\cdot\|$ induces a distance metric $d'(\mathbf{u}, \mathbf{v})=\|\mathbf{u}-\mathbf{v}\|$, which is called the induced metric.
We now explicitly show that $d([\mathbf{h}], [\mathbf{h}']) = \|f(\mathbf{h}) - f(\mathbf{h}')\|$ satisfies the four conditions in \definitionref{defn:distance_metric}.
Since $\|\cdot\|$ is a vector norm, non-negativity clearly holds.
Since $f$ is an orbit separating invariant $[\mathbf{h}] \neq [\mathbf{h}'] \Longleftrightarrow f(\mathbf{h}) \neq f(\mathbf{h}')$, by invoking the identity of indiscernibles of the induced metric we have $[\mathbf{h}] \neq [\mathbf{h}'] \Longleftrightarrow \|f(\mathbf{h}) - f(\mathbf{h}')\| \neq 0$, which proves the identity of indiscernibles for $d([\mathbf{h}], [\mathbf{h}'])$.
Symmetry and triangle inequality of $d([\mathbf{h}], [\mathbf{h}'])$ are inherited from the symmetry and triangle inequality of the induced metric, as we have $d([\mathbf{h}], [\mathbf{h}']) = \|f(\mathbf{h}) - f(\mathbf{h}')\| = \|f(\mathbf{h}') - f(\mathbf{h})\| = d([\mathbf{h}'], [\mathbf{h}])$ for symmetry and $d([\mathbf{h}], [\mathbf{h}']) = \|f(\mathbf{h}) - f(\mathbf{h}')\| \leq \|f(\mathbf{h}) - f(\mathbf{h}'')\| + \|f(\mathbf{h}'') - f(\mathbf{h}')\| = d([\mathbf{h}], [\mathbf{h}'']) + d([\mathbf{h}''], [\mathbf{h}'])$ for triangle inequality.
Therefore, $d([\mathbf{h}], [\mathbf{h}'])$ is a distance metric on the orbit space.
\end{proof}

\subsection{Supplementary Discussion on Orbit Separating Invariants}\label{sec:supplementary_orbit_separating_invariants}
\begin{table}[!t]
\centering
\caption{Orbit separating invariants for some group actions from (\cite{dym2022low}), along with the domain on which orbit separation is guaranteed.}
\begin{adjustbox}{width=0.9\textwidth}
\begin{tabular}{cccc} 
\toprule
Group & Domain & Orbit separating invariant & Dimension \\
\midrule
$\mathrm{S}_n$ & $\mathbb{R}^{n \times n}$ & $[\phi_{\alpha}(\mathbf{h}) = \sum_{j=1}^{n}\mathbf{h}_j^{\alpha}], \quad\alpha \in \mathbb{Z}^d_{\geq 0}, |\alpha| \leq n$ & $\binom{2n}{n}$  \\
\midrule
$\mathrm{O}(n)$ &  $\mathbb{R}^{n \times n}$ & $[\mathrm{vec}(\mathbf{h}^\top\mathbf{h})]$ & $n^2$ \\
\midrule
$\mathrm{SO}(n)$ & $\mathbb{R}^{n \times n}$  & $[\mathrm{vec}(\mathbf{h}^\top\mathbf{h}), \det\mathbf{h}]$ & $n^2 + 1$ \\
\midrule
$\mathrm{O}(1, n-1)$ & $\mathbb{R}^{n \times n}_{\mathrm{full}}$ & $[\mathrm{vec}(\mathbf{h}^\top\boldsymbol{\Lambda}\mathbf{h})],\quad \boldsymbol{\Lambda}=\mathrm{diag}([+1, -1, ..., -1])$   & $n^2$ \\
\midrule
$\mathrm{SL}(n)$ & $\mathbb{R}^{n \times n}_{\mathrm{full}}$ & $[\det\mathbf{h}]$ & 1 \\
\midrule
$\mathrm{GL}(n)$ & $\mathbb{R}^{n \times n}_{\mathrm{full}}$ & $[\det^2 (\mathbf{h}\mathbf{W}_i)/\det^{-1}(\mathbf{h}\mathbf{h}^\top)],\quad \mathbf{W}_i \in \mathbb{R}^{n \times n}, i=1,...2n^2+1$ & $2n^2+1$ \\
\bottomrule
\end{tabular}
\end{adjustbox}
\label{table:summary_groups}
\end{table}
In this section, we provide examples of known orbit separating invariants for certain linearly reductive groups in \tableref{table:summary_groups} along with supplementary discussion for $\mathrm{S}_n$ and $\mathrm{GL}(n)$ groups.
For the symmetric group $\mathrm{S}_n$, the orbit separating invariant is implemented upon a set of invariant functions known as multi-dimensional power sum polynomials~\citep{dym2020universality, segol2019universal}.
For a given input $\mathbf{h} \in \mathbb{R}^{n \times n}$, this invariant is written as:
\begin{align}
    \phi_{\alpha}(\mathbf{h}) = \sum_{j=1}^{n}\mathbf{h}_j^{\alpha},\quad\alpha \in \mathbb{Z}^d_{\geq 0}, |\alpha| \leq n,
\end{align}
where $\mathbf{h}_j$ is $j$th row of $\mathbf{h}$, $\alpha = (\alpha_1, ... \alpha_n) \in \mathbb{N}^n$ is a multi-index, and $\mathbf{h}_j^{\alpha} = \mathbf{h}_{j,\alpha_1}\times...\times\mathbf{h}_{j,\alpha_n}$.
For the general linear group $\mathrm{GL}(n)$, on the contrary to other groups in \tableref{table:summary_groups} whose orbit separating invariants are based on the generating set of invariant polynomials, the generating set with respect to action of $\mathrm{GL}(n)$ consists solely of constant polynomial, which cannot be used to implement the orbit separating invariant $f$.
Instead, \cite{dym2022low} has shown that $f$ can be built by adopting a family of rational invariants as follows:
\begin{align}
    q(\mathbf{h}, \mathbf{W}) = \frac{\det(\mathbf{h}\mathbf{W})^2}{\det(\mathbf{h}\mathbf{h}^\top)},\quad \mathbf{h}, \mathbf{W} \in \mathbb{R}^{n \times n},
\end{align}
where, \cite{dym2022low} shows that, for almost every $\mathbf{W}_1, ... \mathbf{W}_{2n^2+1}$ randomly sampled from $\mathbb{R}^{n \times n}$ the set of functions $f = \{ q(\mathbf{h}, \mathbf{W}_i):i=1,..2n^2+1 \}$ separates orbits of invertible matrices in $\mathbb{R}^{n \times n}$.
This example demonstrates that our method has the possibility to be applied to linearly reductive groups even when non-trivial generating set of invariant polynomial is not available, as long as an alternative orbit separating set is found.

\subsection{Supplementary Discussion on Random Projection for Scalability}\label{sec:projection}
In this section, we discuss the method for controlling the dimension of separating invariants $f:\mathbb{R}^{n\times n}\to\mathbb{R}^k$ to be $k\leq 2n^2+1$ as we have discussed in \sectionref{sec:orbit_distance_minimization}.
Specifically, we summarize the projection technique suggested in \cite{dym2022low}, which produce a smaller set of orbit separating invariants from the generating set.
The idea is summarized below, which is an immediate consequence of corollary 1.9 in \cite{dym2022low}:
\begin{lemma}
    Consider a group $G$ that acts on $\mathbb{R}^{n \times n}$ and let $f: \mathbb{R}^{n \times n} \to \mathbb{R}^k$ be an orbit separating invariant for this action.
    For almost every vectors $\mathbf{w}^{(1)}, ... \mathbf{w}^{(2n^2+1)} \in \mathbb{R}^k$ sampled randomly, the function $\hat{f}: \mathbb{R}^{n \times n} \to \mathbb{R}^{2n^2+1}$ with component functions defined as:
    \begin{equation}\label{eq:projection}
        \hat{f}_j(\mathbf{x}) = \sum_{i=1}^{k} \mathbf{w}_i^{(j)}f_i(\mathbf{x}), \quad j=1,...2n^2+1,
    \end{equation}
    is also an orbit separating invariant.
\end{lemma}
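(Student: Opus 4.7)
The plan is to reduce the statement to the real-algebraic projection result appearing as Corollary~1.9 of \cite{dym2022low}, while sketching why that corollary is the right tool. The easy half is $G$-invariance: each $\hat{f}_j = \sum_{i=1}^{k} \mathbf{w}_i^{(j)} f_i$ is a real linear combination of $G$-invariant functions, so $\hat{f}_j$ and hence the full $\hat{f}$ is $G$-invariant for every choice of $\mathbf{w}^{(j)}$. The content lies in orbit separation, which must be shown to hold for almost every sample $(\mathbf{w}^{(1)}, \ldots, \mathbf{w}^{(2n^2+1)}) \in (\mathbb{R}^k)^{2n^2+1}$.

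I would reformulate orbit separation via a difference set $V = \{f(\mathbf{x}) - f(\mathbf{x}') : \mathbf{x}, \mathbf{x}' \in \mathbb{R}^{n \times n},\ [\mathbf{x}] \neq [\mathbf{x}']\} \subset \mathbb{R}^k$. Because $f$ already separates orbits, $V$ avoids the origin. The projected map $\hat{f}$ fails to separate orbits precisely when some $v \in V$ is annihilated by every $\mathbf{w}^{(j)}$, i.e.\ when $V \cap \bigcap_{j=1}^{2n^2+1} (\mathbf{w}^{(j)})^{\perp} \neq \emptyset$. So the task is to argue that, almost surely over the sampling, this common orthogonal complement avoids $V$.

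The main obstacle is that $V$ is typically uncountable and positive-dimensional, so a naive pointwise union bound fails: for any fixed nonzero $v$ the annihilating event $\{\mathbf{w} : \mathbf{w} \cdot v = 0\}$ is a hyperplane of Lebesgue measure zero, and hence the joint annihilation of $v$ by $2n^2+1$ independent samples is a measure-zero event, but one cannot directly sum this null event over $v \in V$. The resolution, which is exactly the mechanism behind \cite{dym2022low} Cor.~1.9, is a dimension count: $V$ is the image of the real semi-algebraic map $(\mathbf{x}, \mathbf{x}') \mapsto f(\mathbf{x}) - f(\mathbf{x}')$ whose domain has real dimension $2n^2$, so $V$ is contained in a constructible subset of $\mathbb{R}^k$ of dimension at most $2n^2$. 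A Whitney-style projection lemma then guarantees that $2n^2 + 1$ generic linear forms are jointly injective on a set of this dimension, and in particular separate each of its nonzero points from $0$, almost surely. Invoking Corollary~1.9 of \cite{dym2022low} packages this dimension count and measure-theoretic argument cleanly, delivering orbit separation of $\hat{f}$ for almost every $(\mathbf{w}^{(1)}, \ldots, \mathbf{w}^{(2n^2+1)})$ and completing the proof.
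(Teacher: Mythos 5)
Your proposal is correct and matches the paper's approach: both reduce the lemma to Corollary~1.9 of \cite{dym2022low} (the paper states it as an ``immediate consequence'' with no further argument, while you helpfully unpack the difference-set reformulation and the $2n^2$-dimension count that make that corollary applicable). The two-page summary of the underlying transversality argument you give is sound and adds useful context, but the substance of the reduction is the same.
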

With this technique, given an orbit separating invariant $f$ in arbitrary high dimension $k$, we can get a new orbit separating invariant $\hat{f}$ in dimension $2n^2+1$. Furthermore, this technique preserves differentiability as $\hat{f}_j$ is merely a linear combination of $f_i$.
If $f$ is composed of polynomials, as in the many cases of linearly reductive groups~\citep{dym2022low}, we can avoid $k$ intermediate variables $f_i(\mathbf{x})$ in \equationref{eq:projection} by fixing linear projections $\mathbf{w}$ and contracting $k$ polynomials $f_i$ into $2n^2+1$ polynomials $\hat{f}_j$ before computing $\hat{f}(\mathbf{x})$.

\subsection{Proof of Equivariance and Universality}\label{sec:proof_equivariance_universality}
In this section, we formally prove the $G$ equivariance and universality of our symmetrized model $\Phi_{\theta,\omega}$ in \equationref{eq:final_model}.
We define some notations and assumptions beforehand.

\paragraph{Definitions}
In the proofs, we set our group to be a matrix group $G \subset \mathrm{GL}(n)$, and define representations $\rho:G\to\mathrm{GL}(n)$ and $\rho^{gl}:\mathrm{GL}(n)\to\mathrm{GL}(n)$ under the restriction of being \emph{aligned} $\rho(g)=\rho^{gl}(g)$ for all $g\in G$ without the loss of generality.
For example, $\rho$ and $\rho^{gl}$ can be chosen as identity maps.
Recall our $G$ equivariant neural network $q_\omega:(\mathbf{x},\boldsymbol{\epsilon})\mapsto\mathbf{h}\in\mathbb{R}^{n\times n}$ in \equationref{eq:final_model}.
Given that $\boldsymbol{\epsilon}$ is a random variable, we can consider the implicit probabilistic distribution characterized by $q_\omega$, which we denote as $p_\omega(\mathbf{h}|\mathbf{x})$.
Further assuming that $\mathbf{h}$ is full-rank, we have $\mathbf{h}=g\in\mathrm{GL}(n)$, and we write our distribution as $p_\omega(g|\mathbf{x})$.
Based on the notations, we can rewrite \equationref{eq:final_model} more formally as follows: 
\begin{equation}\label{eq:symmetrization_reform}
    \Phi_{\theta,\omega}(\mathbf{x}) = \mathbb{E}_{p_\omega(g|\mathbf{x})}[\rho^{gl}_{\mathcal{Y}}(g) \phi_\theta(\rho^{gl}_{\mathcal{X}}(g)^{-1} \mathbf{x})],
\end{equation}
where $\rho^{gl}_{\mathcal{X}}:\mathrm{GL}(n)\to\mathrm{GL}(\mathcal{X})$ and $\rho^{gl}_{\mathcal{Y}}:\mathrm{GL}(n)\to\mathrm{GL}(\mathcal{Y})$ are the representations of $\mathrm{GL}(n)$ on the input space and output space of the base function $\phi_\theta:\mathcal{X}\to\mathcal{Y}$, respectively.

\subsubsection{Proof of Equivariance}
In this section, we prove that even if our $G$ equivariant parameterized distribution $p_\omega(g|\mathbf{x})$ in \equationref{eq:symmetrization_reform} over which the base function $\phi_\theta$ is symmetrized is not strictly supported on $G$, the entire symmetrized function $\Phi_{\theta,\omega}$ would still retain $G$ equivariance.

\begin{definition}\label{defn:equivariant_distribution}
    Consider a group $G\subset\mathrm{GL}(n)$ acting on a vector space $\mathcal{X}$.
    The conditional probabilistic distribution $p_\omega(g|\mathbf{x})$ for $g \in \mathrm{GL}(n)$ and $\mathbf{x} \in \mathcal{X}$ is $G$ equivariant if it satisfies:
    \begin{equation}
        p_\omega(g|\mathbf{x}) = p_\omega(g'g|\rho_{\mathcal{X}}(g')\mathbf{x}), \text{ } \forall g' \in G, g \in \mathrm{GL}(n), \mathbf{x} \in \mathcal{X},
    \end{equation}
    where $\rho_\mathcal{X}:G\to\mathrm{GL}(n)$ is the representation of $G$ on $\mathcal{X}$.
\end{definition}
This generalizes the notion of probabilistic $G$ equivariance in \cite{kim2023learning} to our framework, since the support of $p_\omega(\cdot|\mathbf{x})$ is not limited to $G$ but extended to $\mathrm{GL}(n)$.
We first prove that $G$ equivariance of $p_\omega$ is achieved with an appropriate choice of $q_\omega$ and $\boldsymbol{\epsilon}$:
\begin{lemma}\label{lemma:equivariant_distribution}
    If $q_\omega$ is $G$ equivariant and $p(\boldsymbol{\epsilon})$ is $G$ invariant under a representation $\rho_{\mathcal{E}}$ that satisfies $\mathrm{det}(\rho_{\mathcal{E}}(\boldsymbol{\epsilon}))=1\forall\boldsymbol{\epsilon}\in\mathcal{E}$, then the probabilistic distribution $p_\omega(g|\mathbf{x})$ characterized by $q_\omega$ is $G$ equivariant (\definitionref{defn:equivariant_distribution}).
\end{lemma}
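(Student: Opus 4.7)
The plan is to reinterpret the claim at the level of random variables and then push the equivariance of $q_\omega$ through a change of variables in the noise. Concretely, the conditional law $p_\omega(\cdot\,|\,\mathbf{x})$ is by definition the law of $H:=q_\omega(\mathbf{x},\boldsymbol{\epsilon})$ with $\boldsymbol{\epsilon}\sim p$, and likewise $p_\omega(\cdot\,|\,\rho_\mathcal{X}(g')\mathbf{x})$ is the law of $H':=q_\omega(\rho_\mathcal{X}(g')\mathbf{x},\boldsymbol{\epsilon})$. The identity in \definitionref{defn:equivariant_distribution} then reduces to showing that $H'$ has the same distribution as $\rho(g')H$ for every $g'\in G$; after identifying $g'$ with $\rho(g')$ via the alignment $\rho=\rho^{gl}$ on $G$, this is exactly the content of $p_\omega(g|\mathbf{x})=p_\omega(g'g\,|\,\rho_\mathcal{X}(g')\mathbf{x})$.

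The key computation proceeds in two steps. First, introduce the substitution $\boldsymbol{\epsilon}'=\rho_\mathcal{E}(g')^{-1}\boldsymbol{\epsilon}$: the hypothesis $\det\rho_\mathcal{E}(g')=1$ makes this a Lebesgue-measure-preserving bijection, and the $G$-invariance of $p$ gives $p(\rho_\mathcal{E}(g')\boldsymbol{\epsilon}')=p(\boldsymbol{\epsilon}')$, so $\boldsymbol{\epsilon}'$ has the same distribution as $\boldsymbol{\epsilon}$. Second, apply the $G$-equivariance of $q_\omega$ to obtain
\[
H' \;=\; q_\omega\bigl(\rho_\mathcal{X}(g')\mathbf{x},\,\rho_\mathcal{E}(g')\boldsymbol{\epsilon}'\bigr) \;=\; \rho(g')\,q_\omega(\mathbf{x},\boldsymbol{\epsilon}').
\]
Since $\boldsymbol{\epsilon}'$ and $\boldsymbol{\epsilon}$ are equal in law, the right-hand side is distributionally equal to $\rho(g')H$, which is the reduction we wanted.

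There is no serious obstacle; the proof is essentially a single change-of-variables argument. The only thing to be careful about is to use each of the three hypotheses in exactly the right place: the equivariance of $q_\omega$ transports the $G$-action from the input $(\mathbf{x},\boldsymbol{\epsilon})$ to the output, the invariance of $p$ neutralizes the action on $\boldsymbol{\epsilon}$, and the unit-determinant condition on $\rho_\mathcal{E}$ is what prevents a spurious Jacobian factor from appearing when translating between a $G$-invariant density and a $G$-invariant measure on the noise space. The alignment $\rho(g')=\rho^{gl}(g')$ for $g'\in G$, fixed earlier in the section, is what justifies rewriting $\rho(g')H$ as $g'H$ so as to match the statement of \definitionref{defn:equivariant_distribution}.
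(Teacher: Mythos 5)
Your proof is correct and follows essentially the same route as the paper: both hinge on the change of variables $\boldsymbol{\epsilon}' = \rho_{\mathcal{E}}(g')^{-1}\boldsymbol{\epsilon}$, the unit-determinant condition to kill the Jacobian, the $G$-invariance of $p$ to identify the laws of $\boldsymbol{\epsilon}'$ and $\boldsymbol{\epsilon}$, and the equivariance of $q_\omega$ to pull $\rho(g')$ out front. The only difference is presentational — you phrase the argument as equality in distribution of random variables, whereas the paper writes $p_\omega(g\,|\,\mathbf{x})$ as a marginalized delta integral and cancels $\rho^{gl}(g')$ inside the delta — but the mathematical content is identical.
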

\begin{proof}
    Our proof is inspired by the proof of Theorem 3 in \cite{kim2023learning}.
    Firstly, we interpret the probability $p_\omega(g|\mathbf{x}, \boldsymbol{\epsilon})$ as a delta distribution: 
    \begin{equation}
        p_\omega(g|\mathbf{x}, \boldsymbol{\epsilon}) = \delta(\rho^{gl}(g) = q_\omega(\mathbf{x}, \boldsymbol{\epsilon})).
    \end{equation}
    We marginalize over $p(\boldsymbol{\epsilon})$ to get $p_\omega(g|\mathbf{x})$:
    \begin{align}
        p_\omega(g | \mathbf{x}) &= \int_{\boldsymbol{\epsilon}}p_\omega(g|\mathbf{x}, \boldsymbol{\epsilon})p(\boldsymbol{\epsilon})d\boldsymbol{\epsilon}\nonumber \\
        &= \int_{\boldsymbol{\epsilon}} \delta(\rho^{gl}(g) = q_\omega(\mathbf{x}, \boldsymbol{\epsilon})) p(\boldsymbol{\epsilon})d\boldsymbol{\epsilon}.
    \end{align}
    Moreover, we have: 
    \begin{align}
        p_\omega(g' g | \rho_{\mathcal{X}}(g')\mathbf{x}) = \int_{\boldsymbol{\epsilon}} \delta(\rho^{gl}(g'g) = q_\omega(\rho_{\mathcal{X}}(g')\mathbf{x}, \boldsymbol{\epsilon})) p(\boldsymbol{\epsilon})d\boldsymbol{\epsilon}.
    \end{align}
    Since $\rho$ is a restriction of $\rho^{gl}$ into $G$, we automatically have $\rho^{gl}(g) = \rho(g)\ \forall g \in G$. Together with the equivariance of $q_\omega$, we have:
    \begin{align}
        q_\omega(\rho_{\mathcal{X}}(g')\mathbf{x}, \boldsymbol{\epsilon}) &= \rho(g')q_\omega(\mathbf{x}, \rho_{\mathcal{E}}(g')^{-1}\boldsymbol{\epsilon}) \nonumber \\
        &= \rho^{gl}(g')q_\omega(\mathbf{x}, \rho_{\mathcal{E}}(g')^{-1}\boldsymbol{\epsilon}).
    \end{align}
    This leads to:
    \begin{align}\label{eq:lem4_0}
        p_\omega(g' g | \rho_{\mathcal{X}}(g')\mathbf{x})  &= \int_{\boldsymbol{\epsilon}} \delta(\rho^{gl}(g'g) = \rho^{gl}(g')q_\omega(\mathbf{x}, \rho_{\mathcal{E}}(g')^{-1}\boldsymbol{\epsilon})) p(\boldsymbol{\epsilon})d\boldsymbol{\epsilon} \nonumber \\
        &= \int_{\boldsymbol{\epsilon}} \delta(\rho^{gl}(g) = q_\omega(\mathbf{x}, \rho_{\mathcal{E}}(g')^{-1}\boldsymbol{\epsilon})) p(\boldsymbol{\epsilon})d\boldsymbol{\epsilon}.
    \end{align}
    Next, to compute \equationref{eq:lem4_0}, we introduce a change of variable $\boldsymbol{\epsilon}' = \rho_{\mathcal{E}}(g')^{-1}\boldsymbol{\epsilon}$:
    \begin{align}
        p_\omega(g' g | \rho_{\mathcal{X}}(g')\mathbf{x}) = \int_{\boldsymbol{\epsilon}'} \delta(\rho^{gl}(g) = q_\omega(\mathbf{x}, \boldsymbol{\epsilon}')) p(\rho_{\mathcal{E}}(g') \boldsymbol{\epsilon}') \frac{1}{|\det \rho_{\mathcal{E}}(g')^{-1}|} d\boldsymbol{\epsilon}'.
    \end{align}
    Since $\rho_{\mathcal{E}}(\cdot)$ always has determinant $1$, we have $|\det \rho_{\mathcal{E}}(g')^{-1}| = 1$. Furthermore, the invariance of $p(\boldsymbol{\epsilon})$ with respect to $G$ gives $p(\rho_{\mathcal{E}}(g') \boldsymbol{\epsilon}') = p(\boldsymbol{\epsilon}')$. 
    Eventually, we get:
    \begin{align}
        p_\omega(g' g | \rho_{\mathcal{X}}(g')\mathbf{x}) &= \int_{\boldsymbol{\epsilon}'} \delta(\rho^{gl}(g) = q_\omega(\mathbf{x}, \boldsymbol{\epsilon}')) p(\boldsymbol{\epsilon}') d\boldsymbol{\epsilon}' \nonumber \\ 
        &= p_\omega(g | \mathbf{x}).
    \end{align}
    This finishes the proof.
\end{proof}
Next, we show the $G$ equivariance of the symmetrized model $\Phi_{\theta, \omega}$ (\equationref{eq:symmetrization_reform}):
\begin{theorem}\label{thm:equivariance}
    If $p_\omega$ is $G$ equivariant, then $\Phi_{\theta, \omega}$ is $G$ equivariant for arbitrary $\phi_\theta$.
\end{theorem}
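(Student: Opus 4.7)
The plan is to unfold $\Phi_{\theta,\omega}(\rho_{\mathcal{X}}(g')\mathbf{x})$ using its definition in \equationref{eq:symmetrization_reform}, apply the assumed equivariance of $p_\omega$ (\definitionref{defn:equivariant_distribution}) to rewrite the sampling distribution, then perform a left translation on the integration variable so that the expectation becomes one under $p_\omega(\cdot|\mathbf{x})$. Finally I would use the fact that $\rho^{gl}$ is a homomorphism together with the alignment $\rho^{gl}(g')=\rho(g')$ for $g'\in G$ to factor $\rho_{\mathcal{Y}}(g')$ out of the expectation and to cancel $\rho^{gl}_{\mathcal{X}}(g')^{-1}\rho_{\mathcal{X}}(g')$ inside $\phi_\theta$.

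Concretely, for any $g'\in G$ I would start from
\begin{equation*}
\Phi_{\theta,\omega}(\rho_{\mathcal{X}}(g')\mathbf{x})
= \mathbb{E}_{g\sim p_\omega(\cdot|\rho_{\mathcal{X}}(g')\mathbf{x})}\bigl[\rho^{gl}_{\mathcal{Y}}(g)\,\phi_\theta(\rho^{gl}_{\mathcal{X}}(g)^{-1}\rho_{\mathcal{X}}(g')\mathbf{x})\bigr].
\end{equation*}
By \definitionref{defn:equivariant_distribution}, sampling $g\sim p_\omega(\cdot|\rho_{\mathcal{X}}(g')\mathbf{x})$ is equivalent (via $g = g'\tilde g$) to sampling $\tilde g\sim p_\omega(\cdot|\mathbf{x})$, since $p_\omega(g'\tilde g|\rho_{\mathcal{X}}(g')\mathbf{x})=p_\omega(\tilde g|\mathbf{x})$. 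Substituting and using $\rho^{gl}(g'\tilde g)=\rho^{gl}(g')\rho^{gl}(\tilde g)$ in both $\mathcal{X}$ and $\mathcal{Y}$ gives
\begin{equation*}
\Phi_{\theta,\omega}(\rho_{\mathcal{X}}(g')\mathbf{x})
= \mathbb{E}_{\tilde g\sim p_\omega(\cdot|\mathbf{x})}\bigl[\rho^{gl}_{\mathcal{Y}}(g')\rho^{gl}_{\mathcal{Y}}(\tilde g)\,\phi_\theta(\rho^{gl}_{\mathcal{X}}(\tilde g)^{-1}\rho^{gl}_{\mathcal{X}}(g')^{-1}\rho_{\mathcal{X}}(g')\mathbf{x})\bigr].
\end{equation*}
Since $g'\in G$ and $\rho,\rho^{gl}$ are aligned on $G$, the factor $\rho^{gl}_{\mathcal{X}}(g')^{-1}\rho_{\mathcal{X}}(g')$ is the identity, and $\rho^{gl}_{\mathcal{Y}}(g')=\rho_{\mathcal{Y}}(g')$ is a deterministic prefactor that pulls out of the expectation, yielding $\rho_{\mathcal{Y}}(g')\Phi_{\theta,\omega}(\mathbf{x})$.

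The only non-routine step is the change of variable from $g$ to $\tilde g=g'^{-1}g$ inside the expectation: this is clean when expressed through the identity $p_\omega(g'\tilde g|\rho_{\mathcal{X}}(g')\mathbf{x})=p_\omega(\tilde g|\mathbf{x})$ (no Jacobian appears because we are pushing the equivariance identity on the distribution rather than performing an integral substitution on an underlying base measure), so I would phrase the argument at the level of distributions rather than densities to avoid any unit-volume considerations. The remaining manipulations are purely algebraic consequences of $\rho^{gl}$ being a homomorphism and of the alignment assumption $\rho(g)=\rho^{gl}(g)$ for $g\in G$, so no delicate analytical obstacle is expected.
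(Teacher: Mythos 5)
Your argument is essentially the paper's own proof: both start from the definition in \equationref{eq:symmetrization_reform}, introduce $\tilde g = g'^{-1}g$ (the paper calls it $h$), invoke the equivariance identity $p_\omega(g'\tilde g|\rho_\mathcal{X}(g')\mathbf{x})=p_\omega(\tilde g|\mathbf{x})$ to re-index the expectation, then factor $\rho^{gl}_\mathcal{Y}(g')$ out and cancel $\rho^{gl}_\mathcal{X}(g')^{-1}\rho_\mathcal{X}(g')$ using the alignment of $\rho$ and $\rho^{gl}$ on $G$. One small caveat: your parenthetical claim that no Jacobian appears is not quite settled by ``phrasing at the level of distributions,'' since \definitionref{defn:equivariant_distribution} is stated as a pointwise equality of densities, so a density-to-expectation step does in principle involve the Jacobian of $h\mapsto g'h$ on $\mathbb{R}^{n\times n}$; this vanishes only because $|\det g'|=1$ for the unimodular groups considered (or if one takes a left-Haar reference measure). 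The paper's proof glosses over the same point, so this is not a gap relative to the paper, just a place where both arguments tacitly use the same assumption.
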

\begin{proof}
We prove $\Phi_{\theta, \omega}(\rho_{\mathcal{X}}(g')\mathbf{x}) = \rho_{\mathcal{Y}}(g')\Phi_{\theta, \omega}(\mathbf{x})$ for all $\mathbf{x} \in \mathcal{X}$, $g' \in G$.
From \equationref{eq:symmetrization_reform}:
\begin{align}\label{eq:thm1_0}
    \Phi_{\theta, \omega}(\rho_{\mathcal{X}}(g')\mathbf{x}) = \mathbb{E}_{p_\omega(g|\rho_{\mathcal{X}}(g')\mathbf{x})} [\rho^{gl}_{\mathcal{Y}}(g) \phi_\theta(\rho^{gl}_{\mathcal{X}}(g)^{-1} \rho_{\mathcal{X}}(g')\mathbf{x})].
\end{align}
Let us define $h = g'^{-1}g \in \mathrm{GL}(n)$, then we have $g = g'h$. Because $p_{\omega}$ is $G$ equivariant, we have $p_\omega(g|\rho_{\mathcal{X}}(g')\mathbf{x}) = p_\omega(g'h|\rho_{\mathcal{X}}(g')\mathbf{x}) = p_\omega(h|\mathbf{x})$. Thus, \equationref{eq:thm1_0} becomes: 
\begin{align}
    \Phi_{\theta, \omega}(\rho_{\mathcal{X}}(g')\mathbf{x}) &= \mathbb{E}_{p_\omega(h|\mathbf{x})}[\rho^{gl}_{\mathcal{Y}}(g'h) \phi_\theta(\rho^{gl}_{\mathcal{X}}(g'h)^{-1} \rho^{gl}_{\mathcal{X}}(g')\mathbf{x} )] \nonumber \\
    &= \rho^{gl}_{\mathcal{Y}}(g') \mathbb{E}_{p_\omega(h|\mathbf{x})} [\rho^{gl}_{\mathcal{Y}}(h) \phi_\theta(\rho^{gl}_{\mathcal{X}}(h)^{-1} \mathbf{x})] \nonumber \\
    &= \rho^{gl}_{\mathcal{Y}}(g') \Phi_{\theta, \omega}(\mathbf{x}) \nonumber \\ 
    &= \rho_{\mathcal{Y}}(g') \Phi_{\theta, \omega}(\mathbf{x}).
\end{align}
The last equality is from the fact that $\rho$ is a restriction of $\rho^{gl}$ to $G$.
This finishes the proof.
\end{proof}

\subsubsection{Proof of Universality}

In this section, we prove that even if our $G$ equivariant parameterized distribution $p_\omega(g|\mathbf{x})$ in \equationref{eq:symmetrization_reform} is not strictly supported on $G$, the entire symmetrized function $\Phi_{\theta,\omega}$ is still a universal approximator of arbitrary $G$ equivariant functions as long as $p_\omega(g|\mathbf{x})$ can approximate a $G$~equivariant distribution $h(g|\mathbf{x})$ which is compactly supported on $G$.

\begin{definition}\label{defn:compact_contract}
    We say that a $G$ equivariant distribution $h(g|\mathbf{x})$ on $g\in \mathrm{GL}(n)$ and $\mathbf{x}\in\mathcal{X}$ is \emph{compactly supported on $G$} if (1) $h(g|\mathbf{x})$ is supported on $G$ for all $\mathbf{x}$, and (2) for any compact set $\mathcal{K} \subset \mathcal{X}$, the union of the support $\cup_{\mathbf{x}\in\mathcal{K}}\,\mathrm{supp}\,h(g|\mathbf{x})$ is compact.
\end{definition}

\begin{definition}\label{defn:approximate_compact_contract}
    We say that a parameterized $G$ equivariant distribution $p_\omega(g | \mathbf{x})$ on $g \in \mathrm{GL}(n)$ and $\mathbf{x} \in \mathcal{X}$ is \emph{approximately compactly supported on $G$} if there exists a distribution $h(g|\mathbf{x})$ compactly supported on $G$ such that, for any compact set $\mathcal{K} \subset \mathcal{X}$ and any $0 < \alpha < 1$, there exists a choice of parameters $\omega$ that (1) the following holds for all $g \in G$ and $\mathbf{x} \in \mathcal{K}$:
    \begin{align}\label{eq:p_approaching_h}
        p_{\omega}(g | \mathbf{x}) \geq (1 - \alpha)h(g|\mathbf{x}),
    \end{align}
    and (2) the union of the support $\cup_{\mathbf{x}\in\mathcal{K}}\,\mathrm{supp}\,p_\omega(g|\mathbf{x})$ is a subset of a compact set $\mathcal{H}$ that only depend on the given $h$ and $\mathcal{K}$.
\end{definition}

\begin{lemma}\label{lemma:universal_contract}
    Let $h(g|\mathbf{x})$ be a $G$ equivariant distribution compactly supported on $G$.
    For any function $\phi_\theta: \mathcal{X} \to \mathcal{Y}$, we define its symmetrization $\kappa_{\theta}: \mathcal{X} \to \mathcal{Y}$ over $h(g|\mathbf{x})$ as follows:
    \begin{equation}
        \kappa_{\theta}(\mathbf{x}) = \mathbb{E}_{h(g|\mathbf{x})}[\rho_{\mathcal{Y}}(g) \phi_\theta(\rho_{\mathcal{X}}(g)^{-1}\mathbf{x})].
    \end{equation}
    Then, $\kappa_{\theta}$ is $G$ equivariant.
    Furthermore, $\kappa_{\theta}$ is a universal approximator of $G$ equivariant functions if $\phi_\theta$ is a universal approximator.
\end{lemma}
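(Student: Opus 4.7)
The plan proceeds in two independent parts, one for each assertion.

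For $G$ equivariance of $\kappa_\theta$, I would mirror the proof of Theorem \ref{thm:equivariance}, with the simplification that both the representation of $G$ and the distribution $h$ live naturally on $G$ rather than on $\mathrm{GL}(n)$. Expanding $\kappa_\theta(\rho_\mathcal{X}(g')\mathbf{x})$, I substitute $g = g'\tilde g$ (a bijective reparameterization of $G$), invoke the $G$ equivariance of $h$ to rewrite $h(g'\tilde g|\rho_\mathcal{X}(g')\mathbf{x}) = h(\tilde g|\mathbf{x})$, and then use the homomorphism property $\rho_\mathcal{Y}(g'\tilde g) = \rho_\mathcal{Y}(g')\rho_\mathcal{Y}(\tilde g)$ together with $\rho_\mathcal{X}(g'\tilde g)^{-1}\rho_\mathcal{X}(g') = \rho_\mathcal{X}(\tilde g)^{-1}$ to factor $\rho_\mathcal{Y}(g')$ out of the expectation. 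This yields the desired identity $\kappa_\theta(\rho_\mathcal{X}(g')\mathbf{x}) = \rho_\mathcal{Y}(g')\kappa_\theta(\mathbf{x})$.

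For universality, the key observation is that the symmetrization operator is idempotent on $G$ equivariant functions: for any $G$ equivariant $\psi$, the integrand $\rho_\mathcal{Y}(g)\psi(\rho_\mathcal{X}(g)^{-1}\mathbf{x})$ equals $\psi(\mathbf{x})$ for every $g$ in the support of $h(\cdot|\mathbf{x})\subset G$, so the symmetrization of $\psi$ recovers $\psi$. This reduces approximation of $\psi$ by $\kappa_\theta$ to approximation of $\psi$ by $\phi_\theta$ on a compact set, after writing
\begin{align*}
    \kappa_\theta(\mathbf{x}) - \psi(\mathbf{x}) = \mathbb{E}_{h(g|\mathbf{x})}[\rho_\mathcal{Y}(g)(\phi_\theta(\rho_\mathcal{X}(g)^{-1}\mathbf{x}) - \psi(\rho_\mathcal{X}(g)^{-1}\mathbf{x}))].
\end{align*}
Fix any compact $\mathcal{K}\subset\mathcal{X}$ and $\varepsilon > 0$. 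The compact-support hypothesis gives that $\mathcal{G} = \bigcup_{\mathbf{x}\in\mathcal{K}}\mathrm{supp}\, h(g|\mathbf{x})$ is compact in $G$; continuity of the representation action makes $\mathcal{K}' = \{\rho_\mathcal{X}(g)^{-1}\mathbf{x} : g \in \mathcal{G},\, \mathbf{x} \in \mathcal{K}\}$ compact in $\mathcal{X}$, and continuity of $\rho_\mathcal{Y}$ gives $M := \sup_{g\in\mathcal{G}}\|\rho_\mathcal{Y}(g)\| < \infty$. Picking $\theta$ by universality of $\phi_\theta$ so that $\sup_{\mathbf{u}\in\mathcal{K}'}\|\phi_\theta(\mathbf{u}) - \psi(\mathbf{u})\| < \varepsilon/M$ and applying Jensen's inequality inside the expectation then gives $\sup_{\mathbf{x}\in\mathcal{K}}\|\kappa_\theta(\mathbf{x}) - \psi(\mathbf{x})\| < \varepsilon$.

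I do not anticipate any genuine technical obstacle, only careful bookkeeping. The one point that requires attention is the chain of compactness and continuity arguments needed to turn the compact support of $h$ into a compact target domain for the universal approximator together with a finite bound on $\|\rho_\mathcal{Y}\|$; this implicitly assumes the representations $\rho_\mathcal{X}, \rho_\mathcal{Y}$ are continuous, a standard condition on matrix group representations. The equivariance argument also tacitly uses left invariance of the underlying measure on $G$ when performing the change of variables $g = g'\tilde g$, which is immediate for the (left) Haar measure commonly assumed on matrix Lie groups.
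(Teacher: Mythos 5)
Your proposal matches the paper's proof essentially step for step: the same reparameterization $g = g'\tilde g$ plus the $G$ equivariance of $h$ for the equivariance claim, and the same idempotence-of-symmetrization observation, compactness chain ($\mathcal{G}$, $\mathcal{K}'$ corresponding to the paper's $\mathcal{M}_\mathcal{K}$, $\mathcal{K}_{\mathrm{sym}}$), bound $M$ on $\|\rho_\mathcal{Y}\|$, and choice of $\theta$ via universal approximation on the compact set for the universality claim. Your closing remarks that the argument tacitly assumes continuity of $\rho_\mathcal{X}, \rho_\mathcal{Y}$ and left-invariance of the base measure on $G$ during the change of variables are correct and make explicit two hypotheses the paper leaves implicit.
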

\begin{proof}
    Our proof is inspired by the proof of theorem 2 of \cite{kim2023learning}.
    We first prove the $G$ equivariance of $\kappa_\theta$ by showing $\kappa_{\theta}(\rho_{\mathcal{X}}(g')\mathbf{x}) = \rho_{\mathcal{Y}}(g')\kappa_{\theta}(\mathbf{x})\forall g \in G, \mathbf{x} \in \mathcal{X}$.
    We write:
    \begin{align}\label{eq:fix_universal_00}
        \kappa_{\theta}(\rho_{\mathcal{X}}(g')\mathbf{x}) = \mathbb{E}_{h(g|\rho_{\mathcal{X}}(g')\mathbf{x})}[\rho_{\mathcal{Y}}(g)\phi_\theta(\rho_{\mathcal{X}}(g)^{-1}\rho_{\mathcal{X}}(g')\mathbf{x})].
    \end{align}
    Let $m = g'^{-1}g$, then we have $g = g'm$. Since $h$ is $G$ equivariant, we have $h(g|\rho_{\mathcal{X}}(g')\mathbf{x}) = h(g'm|\rho_{\mathcal{X}}(g')\mathbf{x}) = h(m|\mathbf{x})$. Therefore, \equationref{eq:fix_universal_00} becomes the following:
    \begin{align}
        \kappa_{\theta}(\rho_{\mathcal{X}}(g')\mathbf{x}) &= \mathbb{E}_{h(m|\mathbf{x})}[\rho_{\mathcal{Y}}(g'm)\phi_\theta(\rho_{\mathcal{X}}(g'm)^{-1}\rho_{\mathcal{X}}(g')\mathbf{x}] \nonumber \\ 
        &= \rho_{\mathcal{Y}}(g') \mathbb{E}_{h(m|\mathbf{x})}[\rho_{\mathcal{Y}}(m) \phi_\theta(\rho_{\mathcal{X}}(m)^{-1}\mathbf{x})] \nonumber \\ 
        &= \rho_{\mathcal{Y}}(g') \kappa_{\theta}(\mathbf{x}),
    \end{align}
    showing the $G$ equivariance of $\kappa_\theta$.

    We now prove the universality of $\kappa_\theta$.
    Assume a compact set $\mathcal{K} \subset \mathcal{X}$ is given.
    Let us denote $\mathcal{M}_{\mathcal{K}} = \cup_{\mathbf{x}\in\mathcal{K}}\,\mathrm{supp}\,h(g|\mathbf{x})$ and $\mathcal{N}_{\mathcal{K}} = \{g^{-1} | g\in \mathcal{M}_{\mathcal{K}}\}$.
    Since $h$ is compactly supported on $G$, by definition $\mathcal{M}_{\mathcal{K}}$ is compact.
    Furthermore, $\mathcal{N}_{\mathcal{K}}$ is also compact as it is image of a compact set $\mathcal{M}_{\mathcal{K}}$ under matrix inversion operator $g\mapsto g^{-1}$ which is continuous on $\mathrm{GL}(n)$.
    Let $\psi: \mathcal{X} \to \mathcal{Y}$ be an arbitrary $G$ equivariant function. By equivariance of $\psi$, we have: 
    \begin{align}\label{eq:fix_universal_0}
        \|\psi(\mathbf{x}) - \kappa_{\theta}(\mathbf{x})\| &= \|\psi(\mathbf{x}) - \mathbb{E}_{h(g|\mathbf{x})}[\rho_{\mathcal{Y}}(g) \phi_\theta(\rho_{\mathcal{X}}(g)^{-1}\mathbf{x})]\| \nonumber \\
        &= \|\mathbb{E}_{h(g|\mathbf{x})}[\psi(\mathbf{x})] - \mathbb{E}_{h(g|\mathbf{x})}[\rho_{\mathcal{Y}}(g) \phi_\theta(\rho_{\mathcal{X}}(g)^{-1}\mathbf{x})] \| \nonumber \\
        &= \|\mathbb{E}_{h(g|\mathbf{x})}[\rho_{\mathcal{Y}}(g) \psi(\rho_{\mathcal{X}}(g)^{-1} \mathbf{x})] - \mathbb{E}_{h(g|\mathbf{x})}[\rho_{\mathcal{Y}}(g) \phi_\theta(\rho_{\mathcal{X}}(g)^{-1}\mathbf{x})] \|.
    \end{align}
    Since the union of the support $\mathcal{M}_{\mathcal{K}} = \cup_{\mathbf{x}\in\mathcal{K}}\,\mathrm{supp}\,h(g|\mathbf{x})$ is compact and $\mathcal{Y}$ is finite-dimension, there exist $c > 0$ such that $\|\rho_{\mathcal{Y}}(g)\| \leq c \forall g \in \mathcal{M}_{\mathcal{K}}$. Therefore, \equationref{eq:fix_universal_0} becomes:
    \begin{align}\label{eq:fix_universal_1}
        \|\psi(\mathbf{x}) - \kappa_{\theta}(\mathbf{x})\| &\leq \max_{g \in \mathcal{M}_{\mathcal{K}}}\|\rho_{\mathcal{Y}}(g)\|\ \mathbb{E}_{h(g| \mathbf{x})}\|\psi(\rho_{\mathcal{X}}(g)^{-1} \mathbf{x}) - \phi_\theta(\rho_{\mathcal{X}}(g)^{-1}\mathbf{x})\| \nonumber \\ 
        &\leq c \ \mathbb{E}_{h(g| \mathbf{x})}\|\psi(\rho_{\mathcal{X}}(g^{-1}) \mathbf{x}) - \phi_\theta(\rho_{\mathcal{X}}(g^{-1})\mathbf{x})\|.
    \end{align}
    Let us define the set $\mathcal{K}_{\mathrm{sym}} = \cup_{g\in\mathcal{N}_{\mathcal{K}}} \rho_{\mathcal{X}}(g)\mathcal{K}$.
    Since $\mathcal{N}_{\mathcal{K}}$ is compact and $\mathcal{X}$ is finite-dimension, the set $\{\rho_{\mathcal{X}}(g)|g\in\mathcal{N}_{\mathcal{K}} \}$ is compact, which implies the compactness of $\mathcal{K}_{\mathrm{sym}}$.
    Since $\phi_\theta$ is a universal approximator, for any $\epsilon > 0$, there exists a choice of parameters $\theta$ such that:
    \begin{align}
        \max_{g \in \mathcal{N}_{\mathcal{K}}} \|\psi(\rho_{\mathcal{X}}(g) \mathbf{x}) - \phi_\theta(\rho_{\mathcal{X}}(g)\mathbf{x})\| \leq \epsilon/c,
    \end{align}
    for all $\mathbf{x} \in \mathcal{K}$. Therefore, \equationref{eq:fix_universal_1} becomes: 
    \begin{align}
        \|\psi(\mathbf{x}) - \kappa_{\theta}(\mathbf{x})\| &\leq c \text{} \max_{g \in \mathcal{N}_{\mathcal{K}}}\|\psi(\rho_{\mathcal{X}}(g) \mathbf{x}) - \phi_\theta(\rho_{\mathcal{X}}(g)\mathbf{x})\| \leq \epsilon,
    \end{align}
    for all $\mathbf{x} \in \mathcal{K}$. This finishes the proof.
\end{proof}
So far, we have proven universality of symmetrized function $\kappa_\theta$ over $h(g|\mathbf{x})$ compactly supported on $G$ (\definitionref{defn:compact_contract}).
We now prove for symmetrized function $\Phi_{\theta,\omega}$ over parameterized distribution $p_\omega(g|\mathbf{x})$ which is approximately compactly supported on $G$ (\definitionref{defn:approximate_compact_contract}).
\begin{theorem}\label{universality}
    The symmetrized function $\Phi_{\theta, \omega}$ in \equationref{eq:symmetrization_reform} is a universal approximator of $G$ equivariant functions if $\phi_\theta$ is a continuous universal approximator and the parameterized probabilistic distribution $p_\omega$ is approximately compactly supported on $G$. 
\end{theorem}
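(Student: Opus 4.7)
The plan is to combine Lemma~\ref{lemma:universal_contract} with the approximate-support property of $p_\omega$ via a triangle inequality bridged by the auxiliary model $\kappa_\theta$ from Lemma~\ref{lemma:universal_contract}. Fix an arbitrary $G$ equivariant target $\psi:\mathcal{X}\to\mathcal{Y}$, a compact set $\mathcal{K}\subset\mathcal{X}$, and a tolerance $\epsilon>0$. Let $h(g|\mathbf{x})$ be the compactly supported $G$ equivariant distribution promised by \definitionref{defn:approximate_compact_contract}, and write $\kappa_\theta(\mathbf{x})=\mathbb{E}_{h(g|\mathbf{x})}[\rho_{\mathcal{Y}}(g)\phi_\theta(\rho_{\mathcal{X}}(g)^{-1}\mathbf{x})]$ for its symmetrization. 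The triangle inequality gives
\begin{align*}
    \|\psi(\mathbf{x})-\Phi_{\theta,\omega}(\mathbf{x})\|\leq\|\psi(\mathbf{x})-\kappa_\theta(\mathbf{x})\|+\|\kappa_\theta(\mathbf{x})-\Phi_{\theta,\omega}(\mathbf{x})\|,
\end{align*}
so it suffices to control each term by $\epsilon/2$ uniformly on $\mathcal{K}$.

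For the first term I would apply Lemma~\ref{lemma:universal_contract} directly to $\kappa_\theta$, which delivers a choice of $\theta$ making $\|\psi(\mathbf{x})-\kappa_\theta(\mathbf{x})\|\leq\epsilon/2$ for every $\mathbf{x}\in\mathcal{K}$. Fixing this $\theta$, continuity of $\phi_\theta$ together with continuity of the representations $\rho^{gl}$ and of matrix inversion on $\mathrm{GL}(n)$ makes the integrand $F(g,\mathbf{x})=\rho^{gl}_{\mathcal{Y}}(g)\phi_\theta(\rho^{gl}_{\mathcal{X}}(g)^{-1}\mathbf{x})$ jointly continuous. By \definitionref{defn:approximate_compact_contract} and \definitionref{defn:compact_contract}, the union of the supports of $p_\omega(\cdot|\mathbf{x})$ and of $h(\cdot|\mathbf{x})$ as $\mathbf{x}$ ranges over $\mathcal{K}$ sits inside a single compact set $\widetilde{\mathcal{H}}\subset\mathrm{GL}(n)$ that depends only on $h$ and $\mathcal{K}$, not on $\omega$. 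Joint continuity on the compact $\widetilde{\mathcal{H}}\times\mathcal{K}$ then yields a constant $M>0$ with $\|F(g,\mathbf{x})\|\leq M$ on this set.

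The main quantitative step is the second bound. For an $\alpha\in(0,1)$ to be chosen later, \definitionref{defn:approximate_compact_contract} furnishes $\omega$ with $p_\omega(g|\mathbf{x})\geq(1-\alpha)h(g|\mathbf{x})$ for $g\in G$ and $\mathbf{x}\in\mathcal{K}$. Treating both as probability measures on $\mathrm{GL}(n)$, I would decompose $p_\omega=(1-\alpha)h+\mu_\omega$, where $\mu_\omega$ is a non-negative measure of total mass $\alpha$: non-negative on $G$ by the displayed inequality, and equal to $p_\omega$ on $\mathrm{GL}(n)\setminus G$ because $h$ vanishes there. Substituting and using $\rho=\rho^{gl}|_G$ to identify the two integrands on $G$ gives
\begin{align*}
    \|\kappa_\theta(\mathbf{x})-\Phi_{\theta,\omega}(\mathbf{x})\|=\Bigl\|\alpha\,\mathbb{E}_{h(g|\mathbf{x})}[F(g,\mathbf{x})]-\int F(g,\mathbf{x})\,d\mu_\omega(g|\mathbf{x})\Bigr\|\leq 2\alpha M,
\end{align*}
where both bounds use that $h(\cdot|\mathbf{x})$ and $\mu_\omega(\cdot|\mathbf{x})$ are supported in $\widetilde{\mathcal{H}}$ and $\mu_\omega$ has total mass $\alpha$. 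Picking $\alpha<\epsilon/(4M)$ and combining with the first bound closes the argument.

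The main obstacle I foresee is formalising the decomposition $p_\omega=(1-\alpha)h+\mu_\omega$: when $G$ is a lower-dimensional submanifold of $\mathrm{GL}(n)$, the distribution $h$ is singular with respect to any natural ambient measure, so the density inequality in \equationref{eq:p_approaching_h} must be interpreted as the set-wise bound $p_\omega(A|\mathbf{x})\geq(1-\alpha)h(A|\mathbf{x})$ for measurable $A\subset G$. Once this reading is adopted, $\mu_\omega$ is a genuine non-negative measure of mass $\alpha$ and the remaining pieces are routine continuity and compactness arguments.
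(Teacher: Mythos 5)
Your proof is correct and follows essentially the same approach as the paper: triangle inequality through the intermediate $\kappa_\theta$, Lemma~\ref{lemma:universal_contract} for the first term, and a decomposition $p_\omega = (1-\alpha)h + \mu_\omega$ (the paper's $p'_{\omega_*}$ is precisely your $\mu_\omega$) together with uniform bounds on the integrand over compact sets to control the second term, ending with the same choice $\alpha \approx \epsilon/(4M)$ (the paper writes $\alpha = \epsilon/(2(C_1+C_2))$ with two constants rather than your single $M$). The measure-theoretic caveat you raise in your final paragraph is a genuine subtlety that the paper elides by writing $h(g|\mathbf{x})\,dg$ as though $h$ had a density on $\mathrm{GL}(n)$; your set-wise reading of \equationref{eq:p_approaching_h} is the right fix.
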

\begin{proof}
    We prove that for arbitrary $G$ equivariant function $\psi: \mathcal{X} \to \mathcal{Y}$, for any given compact set $\mathcal{K} \subset \mathcal{X}$ and $\epsilon > 0$, there exists a choice of parameters $\theta, \omega$ such that $\|\psi(\mathbf{x}) - \Phi_{\theta, \omega}(\mathbf{x}))\| \leq \epsilon$ holds for all $\mathbf{x} \in \mathcal{K}$.
    First, given that $p_\omega$ is approximately compactly supported on $G$, from \definitionref{defn:approximate_compact_contract} we obtain the distribution $h(g|\mathbf{x})$ which is compactly supported on $G$, and a compact set $\mathcal{H}$ that includes the union of the support $\cup_{\mathbf{x}\in\mathcal{K}}\,\mathrm{supp}\,p_\omega(g|\mathbf{x})$.
    Based on that, we define $\kappa_{\theta}: \mathcal{X} \to \mathcal{Y}$ following \lemmaref{lemma:universal_contract} as $\kappa_{\theta}(\mathbf{x}) = \mathbb{E}_{h(g|\mathbf{x})}[\rho_{\mathcal{Y}}(g) \phi_\theta(\rho_{\mathcal{X}}(g)^{-1}\mathbf{x})]$.
    With the triangle inequality of the metric induced by $\|\cdot\|$, we have:
    \begin{align}
        \|\psi(\mathbf{x}) - \Phi_{\theta, \omega}(\mathbf{x}))\| \leq \|\psi(\mathbf{x}) - \kappa_{\theta}(\mathbf{x})\| + \|\kappa_{\theta}(\mathbf{x}) - \Phi_{\theta, \omega}(\mathbf{x}))\|.
    \end{align}
    According to \lemmaref{lemma:universal_contract}, $\kappa_{\theta}$ is $G$ equivariant and there exists a choice of parameter $\theta_*$ such that $\|\psi(\mathbf{x}) - \kappa_{\theta_*}(\mathbf{x})\| \leq \epsilon/2, \text{ } \forall \mathbf{x} \in \mathcal{K}$. So we only need to prove there exists a choice of parameter $\omega$ such that $\|\kappa_{\theta_*}(\mathbf{x}) - \phi_{\theta_*, \omega}(\mathbf{x}))\| \leq \epsilon/2, \text{ } \forall \mathbf{x} \in \mathcal{K}$.
    We first write: 
    \begin{align}\label{eq:universal_0}
        \|\kappa_{\theta_*}(\mathbf{x}) - \Phi_{\theta_*, \omega}(\mathbf{x}))\| &= \|\mathbb{E}_{h(g|\mathbf{x})}[\rho_{\mathcal{Y}}(g) f_{\theta_*}(\rho_{\mathcal{X}}(g)^{-1}\mathbf{x})] - \mathbb{E}_{p_\omega(g|\mathbf{x})}[\rho^{gl}_{\mathcal{Y}}(g) f_{\theta_*}(\rho^{gl}_{\mathcal{X}}(g)^{-1} \mathbf{x})]\|.
    \end{align}
    Since $p_\omega$ is approximately compactly supported on $G$, for any $0 < \alpha < 1$, there exists a choice of parameter $\omega_*$ such that (1) $p_{\omega_*}(g | \mathbf{x}) \geq (1 - \alpha)h(g|\mathbf{x})$ holds for all $g \in G, \mathbf{x} \in \mathcal{K}$, and (2) the union of the support $\cup_{\mathbf{x}\in\mathcal{K}}\,\mathrm{supp}\,p_{\omega_*}(g|\mathbf{x})$ is a subset of the compact set $\mathcal{H}$.
    Given that, we define an unnormalized distribution $p'_{\omega_*}(g|\mathbf{x})$ as follows:
    \begin{equation}\label{eq:universal_2}
        \begin{cases}
            p'_{\omega_*}(g | \mathbf{x}) = p_{\omega_*}(g | \mathbf{x}) - (1 - \alpha)h(g | \mathbf{x})\quad \forall g \in G, \\ 
        p'_{\omega_*}(g | \mathbf{x}) = p_{\omega_*}(g | \mathbf{x})\quad\forall g \notin G. \\
        \end{cases}
    \end{equation}
    By setting $\omega = \omega_*$, the right side of \equationref{eq:universal_0} becomes: 
    \begin{align}
        \|&\kappa_{\theta_*}(\mathbf{x}) - \Phi_{\theta_*, \omega}(\mathbf{x}))\|\nonumber\\
        &= \left\| \int_{G}[\rho_{\mathcal{Y}}(g) f_{\theta_*}(\rho_{\mathcal{X}}(g)^{-1}\mathbf{x})] h(g|\mathbf{x}) d_g - \int_{\mathrm{GL}(n)}[\rho^{gl}_{\mathcal{Y}}(g) f_{\theta_*}(\rho^{gl}_{\mathcal{X}}(g)^{-1} \mathbf{x})] p_{\omega_*}(g|\mathbf{x})d_g \right\| \nonumber\\
        &= \left\|\alpha \int_{G}[\rho_{\mathcal{Y}}(g) f_{\theta_*}(\rho_{\mathcal{X}}(g)^{-1}\mathbf{x})] h(g|\mathbf{x}) d_g -  \int_{\mathrm{GL}(n)}[\rho^{gl}_{\mathcal{Y}}(g) f_{\theta_*}(\rho^{gl}_{\mathcal{X}}(g)^{-1} \mathbf{x})] p'_{\omega_*}(g|\mathbf{x})d_g \right\|
    \end{align}
    By denoting $A(\mathbf{x}, g) = \rho_{\mathcal{Y}}(g) f_{\theta_*}(\rho_{\mathcal{X}}(g)^{-1}\mathbf{x}) $ and $B(\mathbf{x}, g) = \rho^{gl}_{\mathcal{Y}}(g) f_{\theta_*}(\rho^{gl}_{\mathcal{X}}(g)^{-1} \mathbf{x})$, we have:
    \begin{align}\label{eq:universal_3}
        \|\kappa_{\theta_*}(\mathbf{x}) - \Phi_{\theta_*, \omega}(\mathbf{x}))\| &= \left\|\alpha \int_{G}A(\mathbf{x},g)h(g|\mathbf{x}) d_g - \int_{\mathrm{GL}(n)} B(\mathbf{x}, g)p'_{\omega_*}(g|\mathbf{x})d_g\right\| \nonumber \\ 
        &\leq \alpha \int_{G}\|A(\mathbf{x}, g)\|h(g|\mathbf{x}) d_g +  \int_{\mathrm{GL}(n)}\|B(\mathbf{x}, g)\|p'_{\omega_*}(g|\mathbf{x})d_g.
    \end{align}
    Since $f_{\theta_*}$ is continuous and the set $\mathcal{M}_{\mathcal{K}} = \cup_{\mathbf{x}\in\mathcal{K}}\,\mathrm{supp}\,h(g|\mathbf{x})$ is compact, we have that the set $\cup_{\mathbf{x}\in\mathcal{K}}\{A(\mathbf{x}, g):g\in \mathrm{supp}\,h(g|\mathbf{x})\}$ is compact.
    Thus, there exists $C_1 > 0$ such that $\|A(\mathbf{x}, g)\| \leq C_1$ for all $\mathbf{x} \in \mathcal{K}, g \in \mathcal{M}_{\mathcal{K}}$.
    Let us define the set $\mathcal{N}_{\mathcal{K}} = \cup_{\mathbf{x}\in\mathcal{K}}\,\mathrm{supp}\,p'_{\omega_*}(g|\mathbf{x})$ and assume that $\mathcal{N}_\mathcal{K}\subseteq\mathrm{GL}(n)$.
    From the property of $p_{\omega_*}$ that the union of the support is bounded by the compact set $\mathcal{H}$, and the definition of $p'_{\omega_*}$ in \equationref{eq:universal_2},
    we can see that $\mathcal{N}_{\mathcal{K}}$ is bounded by the compact set $\mathcal{H}$.
    This also leads to the compactness of $\cup_{\mathbf{x}\in\mathcal{K}}\{B(\mathbf{x},g):g\in\mathrm{supp}\,p'_{\omega_*}(g|\mathbf{x})\}$.
    Therefore, there exists $C_2 > 0$ such that $\|\rho^{gl}_{\mathcal{Y}}(g) f_{\theta_*}(\rho^{gl}_{\mathcal{X}}(g)^{-1} \mathbf{x})\| \leq C_2$ for all $\mathbf{x} \in \mathcal{K}$ and $g \in \mathcal{N}_{\mathcal{K}}$.
    As a result, \equationref{eq:universal_3} becomes: 
    \begin{align}
        \|\kappa_{\theta_*}(\mathbf{x}) - \Phi_{\theta_*, \omega_*}(\mathbf{x}))\| &\leq \alpha \int_{G}C_1 h(g|\mathbf{x})d_g + \int_{\mathrm{GL}(n)}C_2  p'_{\omega_*}(g|\mathbf{x})d_g \nonumber \\ 
        &= \alpha C_1 + \alpha C_2.
    \end{align}
    Notice that $C_1, C_2$ are dependent on distribution $h(g|\mathbf{x})$ and the set $\mathcal{H}$, but not $\omega$. Therefore, by choosing $\alpha = \frac{\epsilon}{2(C_1 + C_2)}$, and setting $\omega = \omega_*$ accordingly, we have: 
    \begin{align}
        \|\kappa_{\theta_*}(\mathbf{x}) - \phi_{\theta_*, \omega_*}(\mathbf{x}))\| \leq \epsilon/2, \text{ } \forall \mathbf{x} \in \mathcal{K}
    \end{align}
    This finishes the proof.
\end{proof}

\subsection{Experimental Details}\label{sec:experimental_details}
\subsubsection{Rotated MNIST}

In this section, we supplement the implementation and training details for the Rotated MNIST experiment. 
We employ exactly the same CNN architecture as \cite{kaba2023equivariance} to implement our base function $\phi_\theta$, which has 7 layers with hidden dimensions of 32, 64, 128 for layers 1 -- 3, layers 4 -- 6, and layer 7 respectively.
At layers 4 and 7, a $5 \times 5$ convolution with stride 2 are used instead of pooling.
Other convolutions use $3 \times 3$ filters with stride 1.
Each convolution is followed by batch normalization and ReLU activation.
Dropout of $p=0.4$ is used at layers 4 and 7.

For our $\mathrm{SO}(2)$ equivariant symmetrizer $q_\omega(\mathbf{x}, \boldsymbol{\epsilon})$, given an input image $\mathbf{x} \in \mathbb{R}^{28 \times 28 \times 1}$, we preprocess it into a tensor format and apply an EMLP \citep{finzi2021practical} on it.
In more detail, firstly, we construct a coordinate map $\mathbf{C} \in \mathbb{R}^{28 \times 28 \times 2}$ where the central pixel has coordinate $(0, 0)$ and each corner has coordinate $(\pm 14, \pm 14)$.
This coordinate map will be shared for all images.
We construct a tensor $\mathbf{v} \in \mathbb{R}^{28 \times 28 \times 3}$ by concatenating $\mathbf{x}$ and $\mathbf{C}$ across the channel dimension ($\mathbf{v}[:,:,0] = \mathbf{x}, \mathbf{v}[:,:,1:] = \mathbf{C}$).
Then, we flat the first two dimension to have tensor $\mathbf{v} \in \mathbb{R}^{784 \times 3}$.
Now, each row of $\mathbf{v}$ corresponds to a pixel in the original image, where the first channel is the pixel value, while the last two channels are pixel's coordinates.
Next, we sort rows in $\mathbf{v}$ in an ascending order of the first column ($\mathbf{v}[:,0]$) and mask out all rows with a pixel value less than a predefined threshold $t=0.2$.
Then, we keep the top $m=200$ rows and achieve a new tensor $\mathbf{v} \in \mathbb{R}^{m \times 3}$.
Lastly, we extract two tensors $\mathbf{v}_1, \mathbf{v}_2$ from $\mathbf{v}$ as following: $\mathbf{v}_1 = \mathbf{v}[:,0]^\top \in \mathbb{R}^{1 \times m}, \mathbf{v}_2 = \mathbf{v}[:,1:]^\top \in \mathbb{R}^{2 \times m}$.
Notice that these processing steps do not break rotation equivariance and $\mathbf{v}_1$ is $\mathrm{SO}(2)$ invariant while $\mathbf{v}_2$ is $\mathrm{SO}(2)$ equivariant with respect to input image $\mathbf{x}$.

To train symmetrization models, we employ faithful noise $\boldsymbol{\epsilon} \in \mathbb{R}^{2 \times d_{\mathcal{E}}}$ with $d_\mathcal{E}$ set to 10. To implement the symmetrizer, we leverage a 2-layers EMLP with hidden dimension 64 that has the following input and output representations in notation of \cite{finzi2021a}:
\begin{equation}
    \begin{cases}
        \text{repin} = (d_{\mathcal{E}} + m)T_{(0)} + mT_{(1)} \\ 
        \text{repout} = 2T_{(1)}
    \end{cases}
\end{equation}
With the symmetrizer $q_\omega(\mathbf{x}, \boldsymbol{\epsilon})$ designed in this manner, $q_\omega$ is guaranteed to always output $2 \times 2$ matrices in $\mathrm{SO}(2)$ equivariant manner with respect to the input image $\mathbf{x}$.

When we compute the input transformation $q_\omega(\mathbf{x}, \boldsymbol{\epsilon})^{-1}\cdot\mathbf{x}$ as in \equationref{eq:final_model}, we find that since $q_\omega(\mathbf{x}, \boldsymbol{\epsilon})$ is a neural feature, directly applying matrix inverse to it harms the stability of training.
To enhance stability, we assume as if $q_\omega(\mathbf{x}, \boldsymbol{\epsilon})$ is already close to a $\mathrm{SO}(2)$ matrix, and employ the property of $\mathrm{SO}(n)$ matrices $\rho(g)^{-1}=\rho(g)^\top$ to compute an approximation of inverse.
This approximation becomes more exact as orbit distance training progresses $q_\omega(\mathbf{x}, \boldsymbol{\epsilon}) \to \rho(g)\in\mathrm{SO}(2)$.
Indeed, we observe this significantly improves training stability while not harming the quality of learned transformations $q_\omega(\mathbf{x}, \boldsymbol{\epsilon})$.
To apply the transformation $q_\omega(\mathbf{x}, \boldsymbol{\epsilon})^{-1}$ to input image $\mathbf{x}$, we follow implementation of \cite{kaba2023equivariance} and employ computer vision library Kornia \citep{riba2020kornia}.

\begin{figure}[!t]
\floatconts
  {fig:rotated_mnist}
  {\caption{Example transformed images for Rotated MNIST dataset.
  The first row include images from the original dataset, while the second row are those images transformed by output of our learned symmetrizer $q_\omega(\mathbf{x}, \boldsymbol{\epsilon})$.
  It is clear from these figures that transformations associated with $q_\omega(\mathbf{x}, \boldsymbol{\epsilon})$ is purely rotation.}}
  {\includegraphics[width=0.8\linewidth]{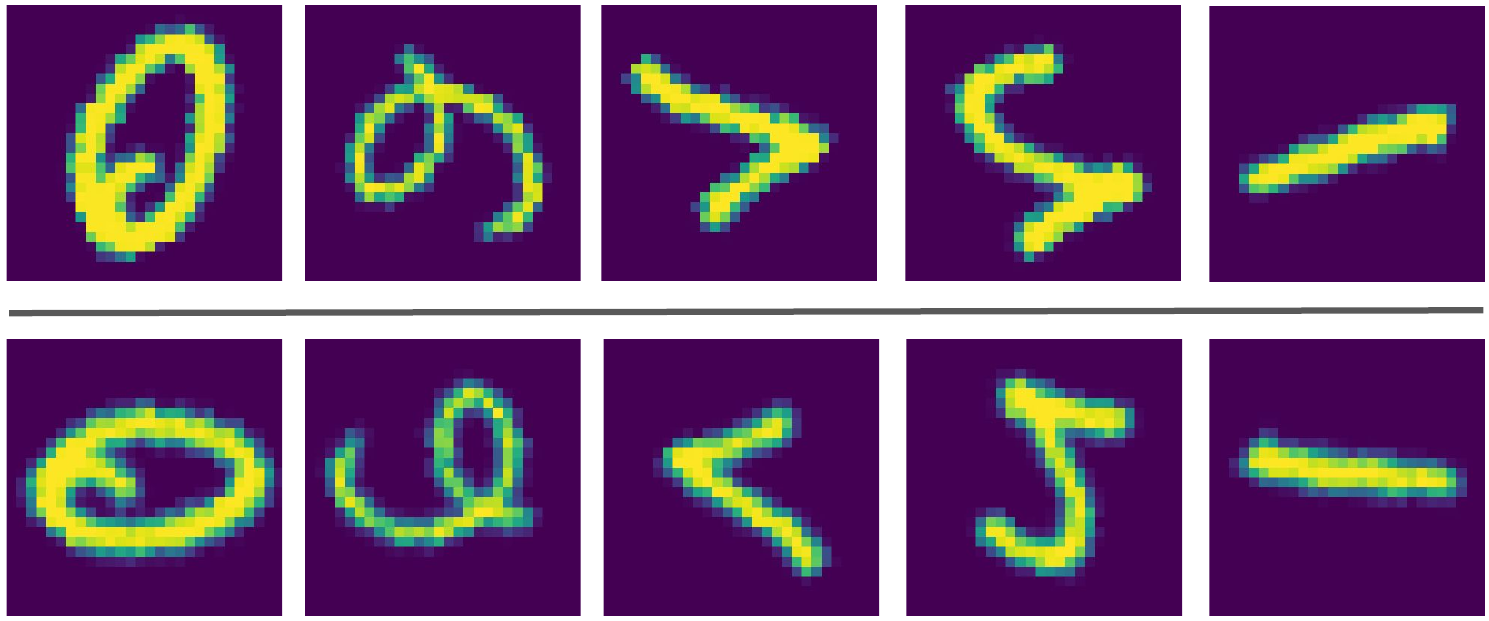} \\[-1.5em]}
  {\vspace{-0.2cm}}
\end{figure}

We train the models by minimizing the cross entropy loss on classification task jointly with L1 norm for orbit distance minimization using $\lambda=1.0$.
The models are trained for 1000 epochs using Adam optimizer and a learning rate of 0.0003.
All symmetrization methods in Table \ref{tab:rotated_mnist} are trained with this same setting for fair comparison.
Some example transformations $q_\omega(\mathbf{x}, \boldsymbol{\epsilon})^{-1}\cdot \mathbf{x}$ learned by our models trained with orbit distance minimization is shown in \figureref{fig:rotated_mnist}, which verifies that valid group representations are indeed learned.

\subsubsection{Particle Scattering}

In this section, we supplement the implementation and training details for the Particle Scattering experiment.
We use 3-layer MLP with 128 hidden dimensions and SiLU~\citep{hendrycks2016gaussian} activation function as our base function $\phi_\theta$.

Given an input $\mathbf{x}\in\mathbb{R}^{4\times 4}$, for the $\mathrm{O}(1, 3)$ equivariant symmetrizer $q_\omega(\mathbf{x}, \boldsymbol{\epsilon})$, we use a 3-layer Scalar MLP~\citep{villar2021scalars} with 28 hidden dimensions and SiLU activation, preceded by an $\mathrm{O}(1, 3)$ equivariant featurization procedure.
In more detail, we first sample the noise variable $\boldsymbol{\epsilon}\in\mathbb{R}^{4\times d_\mathcal{E}}$ from a compactly supported distribution $p(\boldsymbol{\epsilon})$ which is invariant to $\mathrm{O}(1, 3)$ under trivial representation $\rho_\mathcal{E}(g) = \mathbf{I}$.
We use elementwise uniform distribution $\boldsymbol{\epsilon}_{ij}\sim\mathrm{Unif}[\mathbf{a}_{ij}, \mathbf{a}_{ij} + \mathbf{b}_{ij}]$ with trainable offset $\mathbf{a}\in\mathbb{R}^{4\times d_\mathcal{E}}$ and scale $\mathbf{b}\in\mathbb{R}^{4\times d_\mathcal{E}}$ initialized as $\mathbf{1}$ and $\mathbf{0}$ respectively.
Then, before Scalar MLP, we transform $\boldsymbol{\epsilon}$ into a feature matrix $\mathbf{z}\in\mathbb{R}^{4\times d_\mathcal{E}}$ with a simple procedure equivariant to $\mathrm{O}(1, 3)$ transformations of the input $\mathbf{x}$.
We interpret the sampled noise $\boldsymbol{\epsilon}$ as the the Minkowsky inner product between input $\mathbf{x}$ and feature $\mathbf{z}$ (which is unknown at this point) $\boldsymbol{\epsilon} = \mathbf{x}^\top\boldsymbol{\Lambda}\mathbf{z}$ where $\boldsymbol{\Lambda}=\mathrm{diag}([+1, -1, -1, -1])$, from which we obtain $\mathbf{z} = (\mathbf{x}^\top\boldsymbol{\Lambda})^{-1}\boldsymbol{\epsilon}$.
As Minkowsky inner product $\boldsymbol{\epsilon}$, or space-time interval, is $\mathrm{O}(1, 3)$ invariant as can be seen in $\mathbf{x}^\top\boldsymbol{\Lambda}\mathbf{z} = (g\cdot \mathbf{x})^\top\boldsymbol{\Lambda}(g\cdot\mathbf{z}) =\mathbf{x}^\top\rho(g)^\top\boldsymbol{\Lambda}\rho(g)\mathbf{z} = \mathbf{x}^\top\boldsymbol{\Lambda}\mathbf{z}$, having fixed the noise $\boldsymbol{\epsilon}$, transforming $\mathbf{x}\mapsto g\cdot\mathbf{x}$ transforms $\mathbf{z}\mapsto g\cdot\mathbf{z}$ accordingly.
For Canonicalization~\citep{kaba2023equivariance} that has to drop stochasticity, we simply use deterministic $\boldsymbol{\epsilon}=\mathbf{a}$ to obtain the feature $\mathbf{z}$.
Then, we then use this feature $\mathbf{z}$ to supplement the input $\mathbf{x}$ by addition or channel concatenation, and provide the combined feature as an input to Scalar MLP and obtain the output $\mathbf{h}\in\mathbb{R}^{4\times 4}$.
In our experiments, we find that this featurization significantly and consistently improves orbit distance training.
Note that our framework and theoretical results are not altered, as the featurization $(\mathbf{x}, \boldsymbol{\epsilon})\mapsto\mathbf{z}$ is $\mathrm{O}(1, 3)$ equivariant and can be interpreted as a part the equivariant symmetrizer $q_\omega:(\mathbf{x}, \boldsymbol{\epsilon})\mapsto\mathbf{h}$.

Similar as in the $\mathrm{SO}(2)$ experiment, to avoid computing inverse of neural feature during the input transformation $q_\omega(\mathbf{x}, \boldsymbol{\epsilon})^{-1}\cdot\mathbf{x}$ (\equationref{eq:final_model}), we assume as if $q_\omega(\mathbf{x}, \boldsymbol{\epsilon})$ is already close to a $\mathrm{O}(1, 3)$ matrix, and employ the property of $\mathrm{O}(1, 3)$ matrices $\rho(g)^{-1}=\boldsymbol{\Lambda}\rho(g)^\top\boldsymbol{\Lambda}$ to compute an approximation of inverse.
This approximation becomes more exact as orbit distance training progresses $q_\omega(\mathbf{x}, \boldsymbol{\epsilon}) \to \rho(g)\in\mathrm{O}(1, 3)$, and we observe this significantly improves training stability while not harming the quality of learned transformations $q_\omega(\mathbf{x}, \boldsymbol{\epsilon})$.

We train the models by minimizing the mean squared error loss for regression task jointly with L1 norm on the orbit distance minimization using $\lambda=1$.
All models are trained for 1,000 epochs with batch size 1,000 using Adam optimizer with a learning rate of 0.003.
All methods in \tableref{tab:particle_scattering} are trained with this same setting.
Both of our models $q_\omega(\mathbf{x}, \boldsymbol{\epsilon})\mapsto\mathbf{h}$ consistently achieve orbit distance $\|f(\mathbf{h}) - f(\mathbf{I})\|$ of around $0.02$--$0.03$ on unseen inputs $\mathbf{x}$, while random Gaussian matrices have loss of around $\approx50$.
This supports that valid group representations are learned by our approach.

\end{document}